\documentclass[11pt]{article}

\usepackage[margin=1in]{geometry}

\usepackage{amsmath,amssymb,amsthm,mathtools}
\usepackage{bm}

\usepackage{booktabs}
\usepackage{graphicx}
\graphicspath{{figures/}}
\usepackage{wrapfig}
\usepackage{subcaption}
\usepackage[dvipsnames,table]{xcolor}
\definecolor{oursrow}{RGB}{228,240,252}   %

\usepackage{algorithm}
\usepackage{algpseudocode}

\usepackage[numbers,sort&compress]{natbib}
\usepackage{hyperref}
\usepackage{placeins}
\usepackage{float}

\hypersetup{colorlinks=true, linkcolor=NavyBlue, citecolor=NavyBlue, urlcolor=NavyBlue}

\theoremstyle{plain}
\newtheorem{theorem}{Theorem}

\newtheorem{corollary}{Corollary}
\theoremstyle{definition}

\theoremstyle{remark}
\newtheorem{remark}{Remark}

\newcommand{\R}{\mathbb{R}}
\newcommand{\E}{\mathbb{E}}

\newcommand{\loss}{\mathcal{L}}

\newcommand{\Hess}{H}                 %

\newcommand{\norm}[1]{\left\lVert #1\right\rVert}

\title{QUASAR: Lowering the Loss Floor of Quantization-Aware Training with Loss-Aware Reconstruction}
\author{%
  Vincent Counathe$^{1,2}$ \and Ben Athiwaratkun$^{2}$ \and Christopher De Sa$^{1,2}$ \and Tianyi Zhang$^{2}$\\[4pt]
  $^{1}$Cornell University \quad $^{2}$Together AI
}
\date{August 13, 2026}

\begin{document}
\maketitle

\begin{abstract}
As large language model inference shifts toward lower precision, post-training quantization (PTQ) becomes increasingly brittle, making quantization-aware training (QAT) essential for preserving model quality. However, QAT computes the loss and surrogate gradients using a lossy reconstruction of latent full-precision weights, while applying updates to the latent weights themselves. This mismatch can lead to suboptimal training trajectories and a higher loss floor. Second-order PTQ methods mitigate a similar gap by minimizing loss-aware reconstruction error, but doing it once for a frozen model can take hours; repeating this process throughout QAT as the weights evolve is impractical.
We introduce QUASAR, a QAT method that continuously performs lightweight, loss-aware reconstruction in the training loop to lower the loss floor and improve the resulting low-bit model. At each training step, QUASAR uses the exponential moving average of squared gradients as online saliency estimates, searches over a small set of clipping ranges, and fits affine dequantizers via saliency-weighted least squares. Our analysis shows that the loss-aware reconstruction error is the only reconstruction-dependent term in the QAT convergence bound and controls the loss of the final quantized model, establishing QUASAR's objective as a principled optimization target. QUASAR modifies only the training procedure and supports standard deployment formats, including integer quantization and NVFP4, with no inference-time changes or overhead.
Across Qwen3 and Llama-3.1, QUASAR achieves the lowest held-out KL divergence among competitive QAT methods at 2, 3, and 4 bits, reducing KL by at least 10\% at 3 and 4 bits and by 29\% at 2 bits. At 2 bits, it improves average accuracy across eight tasks by 3.5–4.3 percentage points over strong QAT and PTQ baselines. Applied to NVFP4, QUASAR reduces held-out KL by approximately 30\% relative to standard QAT. Finally, under direct low-bit adaptation for mathematical reasoning, QUASAR outperforms both QAT and full-precision training followed by PTQ by at least 10.9 percentage points across five math benchmarks.

\end{abstract}

\begingroup
\setlength{\parskip}{0.3em}

\section{Introduction}

As large language models (LLMs) grow in size and adoption, inference is overtaking
training as the dominant recurring cost of deployment~\cite{ai-inference-trends,beyond-chinchilla}.
To reduce this cost, inference is rapidly moving toward low-precision formats such as
FP4 and INT4~\cite{mxfp4,nvfp4,gptq}, which shrink the model's memory footprint,
reduce decoding latency, and enable higher concurrency and
throughput~\cite{qserve}. However, since models are typically trained in high
precision but served in low precision~\cite{survey-efficient-inference}, preserving
model quality as precision becomes increasingly aggressive remains a critical
challenge. Post-training quantization (PTQ) has been the standard way to convert a
trained model to low precision, but it is becoming increasingly brittle on newer
models. Reasoning and agentic models often operate over long contexts, where
quantization errors can accumulate and degrade
performance~\cite{quantization-hurts-reasoning,quantization-meets-reasoning,quantization-long-context}.
PTQ models may also exhibit behavioral shifts and biases relative to their
full-precision counterparts~\cite{quantization-multilingual,quantbias,accuracy-not-all}.
Producing a native low-precision model with stronger quality guarantees therefore
requires training under the same precision used for serving.

Quantization-aware training (QAT) addresses this need by inserting quantization into
the training loop, allowing the model to adapt to quantization noise as it
learns~\cite{ste}. QAT, however, introduces a structural mismatch into optimization:
the forward pass and loss use quantized--dequantized reconstruction weights $r$,
while the optimizer updates the latent full-precision weights $w$. Because
quantization has zero gradient almost everywhere, QAT typically uses the
straight-through estimator (STE), approximating
$\frac{\partial L}{\partial w}$ with $\frac{\partial L}{\partial r}$. This surrogate
is generally not the optimal descent direction for the latent weights, leading
training along a suboptimal trajectory. The consequence is a \emph{loss-floor gap}:
given the same model and training data, QAT converges to a higher final training and
held-out loss than full-precision training~\citep{chen2025qatscaling,kumar2025scalingprecision}. Prior QAT methods narrow this gap by softening the rounding operation, learning
quantization parameters jointly with the weights, or improving initialization~\cite{dsq,lsq,bitdistiller}. We instead draw on a complementary insight
from second-order PTQ methods: the excess training loss caused by QAT's structural mismatch can be reduced by optimizing the reconstruction process.

Second-order PTQ methods such as GPTQ~\cite{gptq} offer a useful perspective on the structural
mismatch in QAT. A second-order expansion of the loss around the full-precision
weights $w$, assuming that the first-order term is negligible near an optimum, gives
\[
L(r)-L(w) \approx \frac{1}{2}S,
\qquad
S=(r-w)^{\top}H(r-w),
\]
where $r$ denotes the quantized--dequantized reconstruction and $H$ is the Hessian
evaluated at $w$. We refer to $S$ as the \emph{loss-aware reconstruction error}.
Many modern PTQ methods~\cite{adaround,brecq,obq,gptq,quip} minimize this
quantity, or an approximation to it, to construct a low-precision model whose loss
remains close to that of a frozen full-precision model. Applying the same principle to QAT would keep the low-precision loss \(L(r)\) closer to the full-precision loss \(L(w)\) as the latent weights evolve, thereby mitigating the effect of the structural mismatch. The challenge, however, is computational cost. In PTQ, the full-precision model is
frozen, so its reconstruction needs to be optimized only once, typically requiring
minutes to hours for a billion-parameter LLM. In QAT, however, the latent weights
change at every step, requiring the reconstruction to be continually re-optimized.
Running a conventional second-order optimization procedure on every forward pass
would therefore be prohibitively expensive.

We introduce \textbf{QUASAR} (\textbf{Qu}antization-\textbf{a}ware training with lo\textbf{s}s-\textbf{a}ware \textbf{r}econstruction), a QAT method that continually minimizes loss-aware reconstruction error to improve the training trajectory and lower the loss floor. QUASAR decomposes reconstruction into two stages: \emph{quantization}, which maps full-precision weights to discrete codes while treating the clipping range as a free parameter, and \emph{dequantization}, which maps those codes back to reconstructed weights using a learned scale and, for asymmetric quantization, an optional offset. To make the loss-aware objective tractable during training, QUASAR approximates the Hessian using an exponential moving average of squared gradients, yielding per-parameter saliency scores that estimate each weight's effect on the loss. It then searches over a small set of candidate clipping ranges, each defining a different assignment of discrete codes; for each assignment, it solves for the dequantization parameters in closed form by saliency-weighted least squares and selects the candidate with the smallest loss-aware reconstruction error. Our theoretical analysis decomposes the QAT convergence bound into three terms: initialization, minibatch noise, and loss-aware reconstruction error. Only the last depends on the reconstruction map, and it is exactly the objective QUASAR minimizes at each step. Reconstruction is therefore a direct lever on the training trajectory and, under an additional PL condition, on the loss of the final quantized model (Section~\ref{sec:theory}).

Empirically, QUASAR consistently achieves lower training and evaluation loss than competitive QAT baselines across bit widths and model families. For quantization-aware distillation of Qwen3-4B-Thinking~\citep{qwen3}, QUASAR achieves lower final evaluation loss than Standard QAT, LSQ~\citep{lsq}, Denoising QAT~\citep{baseline}, and BitDistiller~\citep{bitdistiller} at every tested bit width, reducing evaluation loss by at least $10\%$ at INT4 and INT3 and by at least $29\%$ at INT2. The gains extend to NVFP4, where QUASAR reduces evaluation loss by about $30\%$ relative to Standard QAT while also improving downstream accuracy. We further apply QUASAR to supervised fine-tuning of the Qwen3-4B Base model at INT4/3/2 using reasoning traces from OpenMathReasoning~\citep{openmathreasoning}, where QUASAR outperforms competitive QAT and PTQ baselines by at least $10.9$ points in average accuracy across five math benchmarks at INT2. These improvements come at little additional cost: QUASAR increases training step time by only about $1.5\%$ and introduces no inference overhead.

\section{Related Work}\label{sec:setup}

In this section we review the two lines of work QUASAR sits between: PTQ and QAT. PTQ treats quantization as a one-shot algorithm: given a
trained model and a small calibration set, select a low-precision value for each weight that damages the loss least. Concretely, most methods preserve quality of the trained model using a
second-order expansion. For instance, Optimal Brain Surgeon quantifies a perturbation by its
curvature-weighted magnitude \citep{obs}, OBQ and GPTQ carry it to LLMs by quantizing one column at a time and updating the not-yet-quantized weights to absorb the error introduced by quantized weights \citep{obq,gptq}, and AdaRound, BRECQ, or YAQA learn the rounding
itself against a reconstruction objective \citep{adaround,brecq,yaqa}. All these methods rest on the premise that the weights are frozen and final, hence the reconstruction happens only once.

A second PTQ line of work expands what a one-shot quantization may be. For instance, QuIP, QuIP\#, QTIP, and AQLM spread outliers with
random orthogonal transforms and replace the quantization grid with lattice, trellis, or
additive codebooks \citep{quip,quipsharp,qtip,aqlm}. At two and three bits these
are the strongest PTQ methods available, however, they require a Hadamard transform
or a codebook decode at inference, negatively impacting throughput, while hardware has instead been moving toward simpler block-scaled formats such as MXFP4 and NVFP4 \citep{mxfp4,nvfp4}.

Below four bits, the PTQ methods that do not add inference overhead degrade sharply. QAT achieves higher quality at these bit widths through various methods: LSQ learns the step size \citep{lsq}, LLM-QAT and BitDistiller supervise with the full-precision model \citep{llmqat,bitdistiller}, and QLoRA freezes the quantized weights to train low-rank adapters over them \citep{qlora}. What none of them do is optimize the map from latent weights to reconstructed weights, which is mostly set by heuristics.

QUASAR directly optimizes that map. At every step it searches over clipping ranges and fits the dequantizer by saliency-weighted least squares, so the reconstruction PTQ solves once is effectively brought into the training loop as weights adapt. Section~\ref{sec:theory} shows that the resulting error controls the reconstruction-induced gradient mismatch and, through it, the loss of the final quantized model. The search and fit produce only a scale and offset, so the trained model deploys exactly as an RTN-quantized one.

\section{Preliminaries}

\subsection{Reconstruction in Standard QAT}\label{sec:borrowed}

Standard QAT stores the full-precision latent weights $w$ in memory and updates them during training. However, the forward pass and loss computation use a different set of weights: the quantized-dequantized reconstructed weights $r$. These reconstructed weights are transient and are derived from the current latent weights for each forward pass. For each weight group, the reconstruction process first maps the latent weights to integer codes, $w \rightarrow q$, and then maps those codes back to real-valued weights, $q \rightarrow r$. The forward pass and loss are computed using $r$, while the optimizer updates $w$ (Figure~\ref{fig:borrowed-grad}).

The mapping from $w$ to $q$ involves rounding, which has no useful derivative. QAT therefore uses the STE~\cite{ste}:
$$
\frac{\partial L}{\partial w}
\approx
\frac{\partial L}{\partial r}.
$$

In asymmetric integer quantization, each row of a weight matrix is partitioned into groups of $g$ weights, such as $g{=}128$. A group $w \in \R^g$ is represented by $b$-bit codes $q \in \{0, \ldots, q_{\max}\}$, where $q_{\max}=2^b-1$. All weights in the group share a scale $s$ and zero-point $z$:
\begin{equation}\label{eq:rtn}
  q_i = \mathrm{clamp}\Big(\Big\lfloor \frac{w_i - z}{s} \Big\rceil\Big), \qquad r_i = s\, q_i + z .
\end{equation}
Equation~\eqref{eq:rtn} defines two steps. First, \emph{code assignment} maps each weight to an integer by shifting, scaling, rounding, and clamping. Second, \emph{dequantization} maps the integer code to the reconstructed weight $r_i=sq_i+z$. The network uses $r$ to compute the forward pass and loss.

A reconstruction requires two decisions. The first is how to assign integer codes $q$ based on the clipping range. The second is how to select the dequantization parameters $(s,z)$ that map these codes to reconstructed weight values. Standard QAT makes both decisions based only on the extreme weight values, without considering the loss. Specifically, it sets the clipping range to the minimum and maximum values in the group:
\[
[z,\; z + s\, q_{\max}] = [\min_i w_i,\; \max_i w_i].
\]
This choice places the extreme weights at the endpoints of the quantization grid. The resulting scale and zero-point are then reused for dequantization (Figure~\ref{fig:method-simple}, top).

\begin{figure}[t]
\centering
\includegraphics[width=\linewidth]{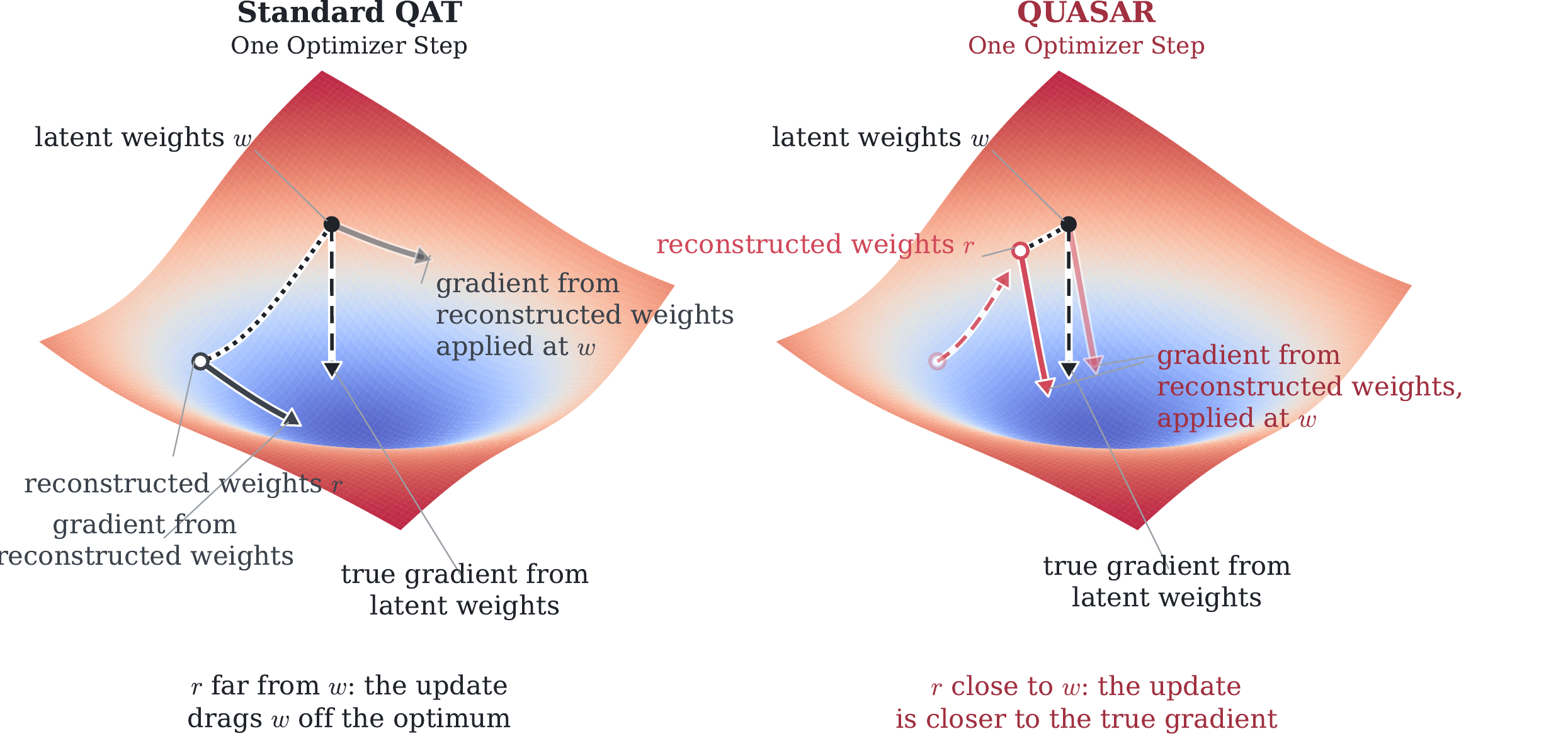}
\caption{\textbf{Training through the reconstruction.} Under the STE, the gradient is computed at the reconstructed weights $r$ and applied to the latent weights $w$ at every step. QUASAR uses scale search and optimized dequantization parameters to keep $r$ close to $w$. As a result, the gradient updates applied to the latent weights more closely approximate the gradients evaluated at those weights.}
\label{fig:borrowed-grad}
\end{figure}

\subsection{The Connection Between Weight Reconstruction and Training Loss in QAT}

The model trains through the reconstructed weights $r$. The forward pass, loss, and gradient through the STE are all computed at $r$, while the optimizer applies the resulting update to the latent weights $w$. Therefore, errors introduced by the reconstruction can affect every optimization step.

At step $t$, let $\Delta_t:=r_t-w_t$ and let
$\Hess_t:=\nabla^2\loss(w_t)$ denote the full Hessian at the current latent
weights. Throughout, $\norm{\cdot}$ is the Euclidean norm for vectors and the
induced $\ell_2$ operator norm for matrices. We expand the neural network's
loss to second order around $w_t$. Near an optimum, the first-order term is
negligible, giving $\loss(r_t)-\loss(w_t)\approx\tfrac{1}{2}S_t$, where
\begin{equation}\label{eq:st}
  S_t \;=\; \Delta_t^{\top} \Hess_t\, \Delta_t.
\end{equation}
We refer to $S_t$ as the \emph{loss-aware reconstruction error}.\footnote{PTQ methods minimize variants of this quantity under different names and use different approximations to the Hessian \citep{adaround,gptq,yaqa}.} Section~\ref{sec:theory} shows that QUASAR's tractable approximation $\widehat S_t$ is the explicit reconstruction-dependent term in the QAT convergence bound and, under a PL condition, controls an upper bound on the loss of the final deployable model. Modern PTQ methods minimize the full-Hessian error, or a variant of it, once for a frozen model \citep{adaround,gptq,yaqa}.

Applying this machinery within the QAT training loop is prohibitively expensive computationally. PTQ solvers rely on costly per-layer linear algebra. Such computations are practical as a one-time operation on a frozen model, although they can still take minutes or hours for models with billions of parameters~\cite{obq,gptq}. In QAT, the weights change after every update, so each solution immediately becomes stale. Repeating the same optimization at every training step would be prohibitively expensive.

In summary, every QAT method must produce a lossy reconstruction of the full-precision model within the constraints of a low-bit format, and the quality of this reconstruction directly affects the training loss. QUASAR builds on the insight that this reconstruction can be optimized by minimizing the approximation $\widehat S_t$, reducing the harmful effect of reconstruction error on the training loss.

\begin{figure}[t]
\centering
\includegraphics[width=\linewidth]{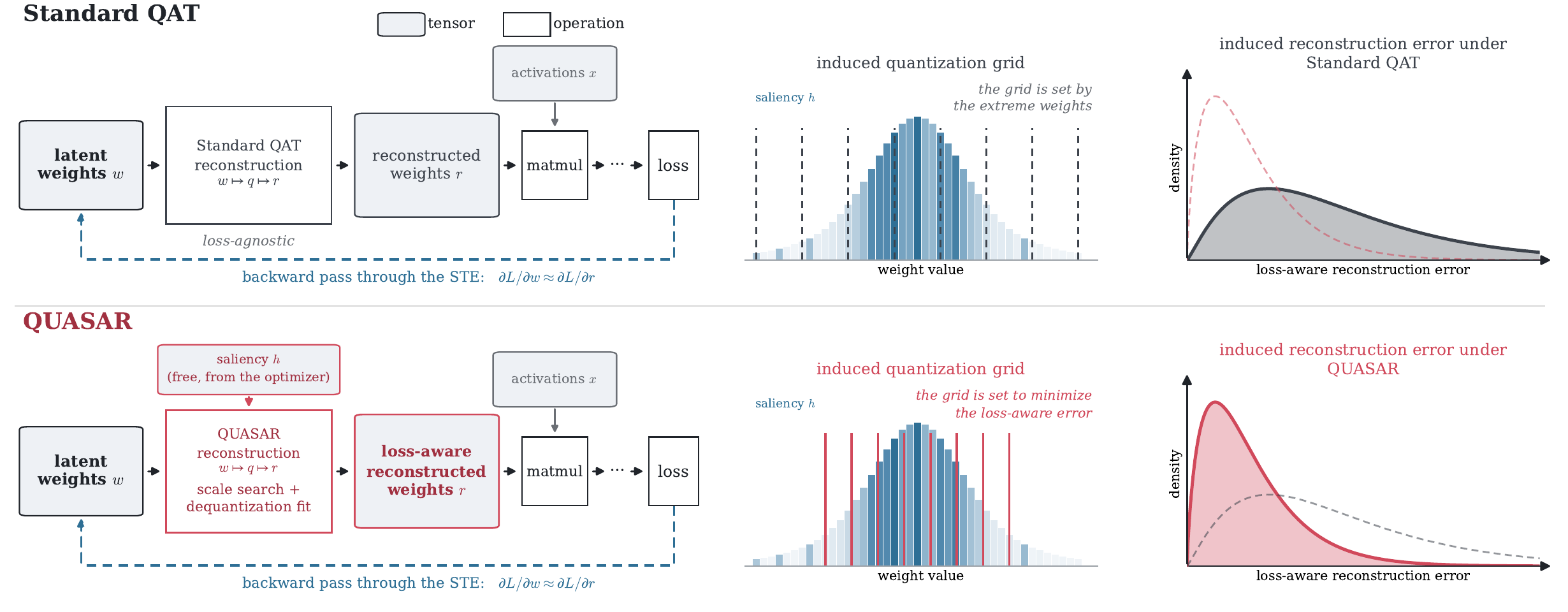}
\caption{\textbf{Standard QAT vs.\ QUASAR.} The two training loops differ in only one component: the reconstruction, or the $w \to q \to r$ mapping. Standard QAT determines its quantization grid from the extreme weights in each group without considering the loss. QUASAR instead selects the grid to minimize the loss-aware reconstruction error. The right side shows the resulting quantization grids and error distributions. Appendix~\ref{app:method} provides empirical measurements.}
\label{fig:method-simple}
\end{figure}

\section{Methodology}\label{sec:method}

In this section, we introduce \textsc{QUASAR}, a QAT method that applies loss-aware weight reconstruction throughout training to improve the quality of the final model. We describe its two core techniques: scale search and optimal dequantization. We then show how to extend \textsc{QUASAR} to production inference formats such as NVFP4.

\subsection{QUASAR: Minimizing Reconstruction Error at Every Step}

Bringing loss-aware reconstruction into the QAT training loop presents two challenges. First, computing the exact Hessian at every step is intractable. Second, even if the Hessian were available, finding the optimal reconstruction for a given set of full-precision weights would also be intractable. \textsc{QUASAR} addresses the first challenge with an online Hessian proxy that is often already available during training. It addresses the second with a lightweight procedure that optimizes quantization and dequantization together.

\subsubsection{A Tractable Loss-Aware Objective}

Computing the full Hessian at every training step is prohibitively expensive. Instead, we use the diagonal Fisher as a nonnegative proxy for curvature. We estimate it online using an exponential moving average of the squared gradients. Adam and AdamW already maintain this quantity as their second moment $v_t$~\citep{adam}. When either optimizer is used, \textsc{QUASAR} can reuse this estimate without additional memory overhead.

We denote the estimate by $h$ and approximate the Hessian as $\Hess_t\approx\operatorname{diag}(h)$. \textsc{QUASAR} then minimizes
\begin{equation}\label{eq:fisher-obj}
  S_t
  \;\approx\;
  \widehat S_t(r;w,h)
  =
  \sum_i h_i\,(r_i-w_i)^2 .
\end{equation}
We call $h_i$ the saliency of $w_i$. In a low-bit format, all $g$ weights in a group share a single scale and zero-point. The reconstruction error must therefore be distributed across the group. Reducing the error for one weight may increase it for another. The saliencies $h$ determine this tradeoff. \textsc{QUASAR} reconstructs high-saliency weights more accurately and allows larger errors for low-saliency weights. This allocation reflects each weight's contribution to $\widehat S_t$. By optimizing code assignment and dequantization together, \textsc{QUASAR} minimizes $\widehat S_t$ and keeps the loss at the reconstructed weights close to the loss at the latent weights.

\subsubsection{Optimizing Both Stages of Reconstruction}

To find a low-error reconstruction that satisfies the quantization constraints, \textsc{QUASAR} optimizes both stages of the reconstruction: code assignment $w\mapsto q$ and dequantization $q\mapsto r$. During each forward pass, \textsc{QUASAR} evaluates several candidate clipping ranges. Each range produces a different code assignment. For each assignment, \textsc{QUASAR} computes the optimal dequantization parameters. It then selects the reconstruction with the lowest loss-aware reconstruction error.

\paragraph{A weighted least-squares fit gives the optimal dequantization.}
For a fixed code assignment $q$, which is induced by a candidate clipping range, \textsc{QUASAR} chooses the dequantization parameters $(s,z)$ in $r_i=sq_i+z$ that minimize Equation~\eqref{eq:fisher-obj}. This is a weighted least-squares problem with the following closed-form solution:
\begin{equation}\label{eq:ridgefit}
  s^\star
  =
  \frac{\sum_i h_i(q_i-\bar q_h)(w_i-\bar w_h)}
       {\sum_i h_i(q_i-\bar q_h)^2},
  \qquad
  z^\star
  =
  \bar w_h-s^\star\bar q_h ,
\end{equation}
where
\[
  \bar q_h=\frac{\sum_i h_iq_i}{\sum_i h_i},
  \qquad
  \bar w_h=\frac{\sum_i h_iw_i}{\sum_i h_i}.
\]
Thus, for any fixed codes $q$, \textsc{QUASAR} finds the dequantization parameters that minimize $\widehat S_t$. Symmetric quantization admits a related closed-form solution.

Dequantization alone cannot correct a poor code assignment. Weights assigned the same code must have the same reconstructed value, regardless of the choice of $(s,z)$. This limitation becomes especially important at low bit widths, where few codes are available. \textsc{QUASAR} therefore combines optimal dequantization with scale search to improve the code assignment.

\paragraph{Scale search finds good code assignments.}
\textsc{QUASAR} searches over clipping ranges, which determine the code assignments in $w\mapsto q$. Each candidate scale factor $f \in \mathcal{F} \subseteq (0, 1]$ produces a code assignment $q_f$. For each assignment, \textsc{QUASAR} computes the optimal dequantization parameters using Equation~\eqref{eq:ridgefit}. Let $\widehat S_t(f)$ denote the resulting minimum error. \textsc{QUASAR} selects
\[
  f^\star
  =
  \arg\min_{f\in\mathcal F}\widehat S_t(f)
  =
  \arg\min_{f\in\mathcal F}
  \min_{s,z}
  \sum_i h_i\bigl(sq_{f,i}+z-w_i\bigr)^2 .
\]

The candidate clipping range determines only the code assignment. After the codes are assigned, the optimal dequantization parameters determine the reconstructed values. Scale search changes which weights share a code, while $(s,z)$ determines the value represented by each code. Together, these choices minimize $\widehat S_t$ over the candidate reconstructions at every step (Algorithm~\ref{alg:quasar} and Figure~\ref{fig:method-simple}, bottom).

Scale search is also effective empirically. During quantization-aware distillation of Qwen3-4B with 3-bit integer quantization, 99.6\% of groups select a range narrower than the full minimum-to-maximum range. This selection reduces $\widehat S_t$ to 69\% of the full-range baseline (Appendix~\ref{app:method}, Figure~\ref{fig:range-decomp}).

\textsc{QUASAR} modifies only the weight reconstruction used during the forward pass. It does not change activation quantization or the backpropagation. Gradients pass through the weight quantizer using the same STE, and neither scale search nor dequantization fitting requires a separate backward rule.

\begin{algorithm}[t]
\caption{\textbf{QUASAR reconstruction.} QUASAR searches over clipping ranges and, for each range, fits the scale and zero-point to minimize saliency-weighted reconstruction error.}
\label{alg:quasar}
\begin{algorithmic}[1]
\Require latent weights $w\in\R^{g}$; saliency $h\in\R_{+}^{g}$; grid of clipping-range factors $\mathcal{F} \subseteq (0, 1]$
\State $c \gets \tfrac{1}{2}(\min_i w_i + \max_i w_i)$; \quad $\rho \gets \tfrac{1}{2}(\max_i w_i - \min_i w_i)$
\For{$f \in \mathcal{F}$} \Comment{scale search}
  \State $q_f \gets$ codes of $w$ on the range $[c - f\rho,\; c + f\rho]$ \Comment{Eq.~\eqref{eq:rtn}}
  \State $(s_f, z_f) \gets \textsc{DequantParams}(w, q_f, h)$ \Comment{Eq.~\eqref{eq:ridgefit}}
  \State $\widehat S_t(f) \gets \textstyle\sum_i h_i\,(s_f q_{f,i} + z_f - w_i)^2$ \Comment{Eq.~\eqref{eq:fisher-obj}}
\EndFor
\State $f^\star \gets \arg\min_{f\in\mathcal{F}} \widehat S_t(f)$
\State \Return $r = s_{f^\star}\,q_{f^\star} + z_{f^\star}$
\end{algorithmic}
\vspace{3pt}
{\footnotesize\noindent When $\mathcal{F}=\{1\}$ and $h\equiv\mathbf{1}$, the algorithm optimizes only the dequantization parameters and reduces to Denoising QAT \citep{baseline}. Standard QAT instead reuses the scale and zero-point of the full range ($f{=}1$).\par}
\end{algorithm}

\subsection{Applying QUASAR to NVFP4}\label{sec:method-nvfp4}

\textsc{QUASAR} is not limited to integer quantization. Its two reconstruction steps, searching for code assignments and fitting dequantization parameters, also apply to formats such as NVFP4. NVFP4 is a widely used 4-bit floating-point format for production inference~\cite{alvarez2025nvfp4}.

\textbf{The NVFP4 format.}
NVFP4 represents each weight on the symmetric E2M1 grid, which uses one sign bit, two exponent bits, and one mantissa bit. The format has two levels of scaling: one FP32 scale for the entire tensor and one FP8 scale for each group of 16 weights. Because NVFP4 is symmetric, it uses scale factors but no zero-point.

\textbf{QUASAR for NVFP4.}
We set the tensor-level FP32 scale using the tensor's absolute maximum and do not optimize it. Within each group, \textsc{QUASAR} searches over candidate scales to produce different code assignments $q$ on the E2M1 grid. For each assignment, it selects the FP8 dequantization scale that minimizes the loss-aware objective in Equation~\eqref{eq:fisher-obj}. \textsc{QUASAR} then chooses the code assignment and fitted FP8 scale with the lowest loss-aware reconstruction error.

The search scale determines the E2M1 code assignment. The fitted FP8 scale determines the reconstructed values and is stored for inference. Thus, \textsc{QUASAR} retains the same objective and two-stage optimization used for integer quantization while satisfying the grid and scaling constraints of NVFP4. Section~\ref{sec:exp-nvfp4} presents the corresponding empirical results. Appendix~\ref{app:rubin} describes the extension to the Rubin LUT3 format.

\FloatBarrier

\section{Theoretical Analysis: Loss-Aware Reconstruction Bounds the Loss of the Final Quantized Model}
\label{sec:theory}

In this section, we analyze how QUASAR's reconstruction objective affects QAT
optimization dynamics. QAT maintains latent full-precision weights $w_t$, while its
forward passes use their low-bit reconstruction $r_t$. Under the
identity STE, the gradient evaluated at $r_t$ is applied to $w_t$, creating
the reconstruction-induced gradient mismatch
\begin{equation}
    e_t:=\nabla\loss(r_t)-\nabla\loss(w_t).
    \label{eq:quasar-gradient-mismatch}
\end{equation}

Prior work on biased-SGD theory shows that systematic gradient error introduces an
additional convergence penalty \citep{ajalloeian2020convergence}. Prior work
has also studied when STE gradients provide useful descent directions
\citep{yin2019understanding} and how curvature affects QAT plateaus
\citep{winq}. Our analysis shows that QUASAR's tractable objective $\widehat{S}_t$ is the only reconstruction-dependent term in the QAT convergence bound: minimizing it is therefore the precise lever through which reconstruction controls both the training trajectory (Theorem~\ref{thm:quasar-certificate}) and the loss of the final deployable model (Corollary~\ref{cor:quasar-final-model}).

\subsection{Setup and Assumptions}
\label{sec:theory-setup}

\paragraph{Notation and weight reconstruction.}
Let $\loss:\mathbb R^d\to\mathbb R$ be the training objective and
$\loss^\star:=\inf_x\loss(x)$. At step $t$, let $w_t\in\mathbb R^d$ denote the
latent full-precision weights, $h_t\in\mathbb R_+^d$ the saliencies, and
$\mathcal R_t$ the finite set of candidate reconstructions considered by QUASAR. For any
$r\in\mathcal R_t$, define
\begin{equation}
    \Delta_t(r):=r-w_t,
    \qquad
    \widehat S_t(r)
    :=
    \sum_i h_{t,i}\Delta_{t,i}(r)^2.
    \label{eq:quasar-error}
\end{equation}
Here, $\Delta_t(r)$ is the reconstruction-induced weight perturbation, and
$\widehat S_t(r)$ is the loss-aware reconstruction error.
QUASAR selects the reconstructed weights $r_t$ as
\begin{equation}
    r_t\in\arg\min_{r\in\mathcal R_t}\widehat S_t(r).
    \label{eq:quasar-selection}
\end{equation}

Let $\mathcal H_t$ denote the training history before minibatch
$\mathcal B_t$ is sampled. The candidate set and reconstruction are determined
by $\mathcal H_t$, and we write
$\mathbb E_t[\cdot]:=\mathbb E[\cdot\mid\mathcal H_t]$.

We use the following assumptions. Assumptions~1--3 are sufficient for the main
QAT convergence result, while Assumption~4 is only needed to control the final
deployable model.

\medskip
\noindent\textbf{Assumption 1 (Objective regularity).}
The objective $\loss$ is twice continuously differentiable, $L$-smooth, and
bounded below:
\[
    \loss^\star=\inf_x\loss(x)>-\infty.
\]

\medskip
\noindent\textbf{Assumption 2 (Stochastic STE gradient).}
The identity-STE stochastic gradient can be written as
\[
    G_t=\nabla\loss(r_t)+\xi_t,
    \qquad
    \mathbb E_t[\xi_t]=0,
\]
and assuming
\begin{equation}
    \mathbb E_t\|\xi_t\|^2
    \leq
    M\|\nabla\loss(r_t)\|^2+\sigma^2
    \label{eq:quasar-noise}
\end{equation}
for constants $M,\sigma^2\geq0$. The STE--SGD update we analyse is the following:
\[
    w_{t+1}=w_t-\eta G_t.
\]

\medskip
\noindent\textbf{Assumption 3 (Hessian control).}
There exists a finite constant $C$ such that, for
every reconstruction step under consideration, every
$r\in\mathcal R_t$, and every $u\in[0,1]$,
\begin{equation}
    \left\|
        \nabla^2\loss\!\left(w_t+u(r-w_t)\right)(r-w_t)
    \right\|^2
    \leq
    C\,\widehat S_t(r).
    \label{eq:quasar-curvature-condition}
\end{equation}

\medskip
\noindent\textbf{Assumption 4 (PL condition).}
For some $\mu>0$, the objective satisfies the global Polyak--{\L}ojasiewicz
inequality
\begin{equation}
    \|\nabla\loss(x)\|^2
    \geq
    2\mu\bigl(\loss(x)-\loss^\star\bigr)
    \qquad
    \text{for all }x\in\mathbb R^d.
    \label{eq:quasar-pl}
\end{equation}

\medskip
\begin{remark}[Assumptions]
The smoothness and stochastic-noise conditions in Assumptions~1 and~2 are
standard in stochastic-optimization analysis. Assumption~1 requires the
objective to be twice continuously differentiable and smooth, with a finite
infimum (and no convexity). Assumption~2 allows gradient noise to grow with the
gradient itself and generalizes the classical bounded-variance condition
(recovered with $M=0$); it is the same noise condition under which biased SGD
is analyzed
\citep{ajalloeian2020convergence}.
In words, Assumption~3 says the loss surface is well behaved along the few
candidate directions QUASAR probes at each step: a reconstruction with small
loss-aware error cannot trigger an outsized change in the gradient. A simple
sufficient condition is a uniform positive saliency floor: if every saliency
is at least $h_{\min}>0$,
then $\|\nabla^2\loss(y)(r-w_t)\|^2 \le L^2\|r-w_t\|^2 \le
(L^2/h_{\min})\,\widehat S_t(r)$, which is Assumption~3 with
$C=L^2/h_{\min}$. The $L$-smoothness used here is standard in nonconvex SGD
analysis
\citep{ghadimi2013stochastic,bottou2018optimization,ajalloeian2020convergence}.
The stronger condition
$\nabla^2\loss(y)^2\preceq C\,\mathrm{diag}(h_t)$ would suffice, but
Assumption~3 only requires control along QUASAR's candidate segments. Adam's
second moment motivates the saliencies as a practical curvature proxy, but
does not by itself guarantee this condition.
Assumption~4 is the Polyak--{\L}ojasiewicz (PL) condition, the standard
route from stationarity to function-value control
\citep{polyak1963gradient,karimi2016linear}. It is needed only for
Corollary~\ref{cor:quasar-final-model}, not for the convergence result of
Theorem~\ref{thm:quasar-certificate}. PL is implied by strong convexity but
does not require convexity: it allows nonconvex objectives whose minimizers
form a large set, and it implies that every stationary point is a global
minimum \citep{karimi2016linear}. Local PL-type conditions have been established
under specific overparameterized neural-network regimes \citep{liu2022loss}.
\end{remark}

\begin{remark}[$H$ versus $H^2$]
The squared gradient mismatch $\|e_t\|^2$ is, to first order,
$\Delta^\top H^2 \Delta$, so minimizing an $H^2$-weighted error is also a
natural objective: it targets the gradient mismatch rather than the loss gap.
Theoretically, however, nothing is lost by using QUASAR's $H$-weighted
objective: it already controls the squared gradient
mismatch up to the factor $C$ (Theorem~\ref{thm:quasar-certificate}), so
minimizing $\widehat S_t$ keeps both the loss gap and the mismatch small.
Empirically, we notice the diagonal $H^2$ proxy $\sum_i h_i^2\Delta_i^2$ performs
worse and is less stable than QUASAR's objective
(Appendix~\ref{app:saliency-power}, Figure~\ref{fig:saliency-power}).
\end{remark}

\subsection{From Reconstruction Error to QAT Convergence}
\label{sec:theory-convergence}

\subsubsection{Reconstruction Error Bounds the Gradient Mismatch}

Our analysis has two parts. First, it connects reconstruction error to
gradient mismatch, independently of the optimizer. Second, it shows how this
mismatch enters the convergence guarantee for STE--SGD.

\begin{theorem}[Reconstruction error affects gradient mismatch and convergence]
\label{thm:quasar-certificate}
Under Assumptions~1--3, let $T\geq1$. Then, for every $t<T$, QUASAR's
reconstruction-induced gradient mismatch satisfies
\begin{equation}
    \left\|\nabla\loss(r_t)-\nabla\loss(w_t)\right\|^2
    \leq
    C\,\widehat S_t(r_t)
    =
    C
    \min_{r\in\mathcal R_t}\widehat S_t(r).
    \label{eq:quasar-mismatch-certificate}
\end{equation}
Moreover, if
\[
    0<\eta<\frac{1}{L(1+M)},
\]
then
\begin{equation}
\begin{aligned}
 \frac1T\sum_{t=0}^{T-1}\mathbb E\Big[
   \|\nabla\loss(w_t)\|^2
   +\bigl(1-L\eta(1+M)\bigr)\|\nabla\loss(r_t)\|^2
 \Big]
 \leq{}&
 \frac{2(\loss(w_0)-\loss^\star)}{\eta T}
 +L\eta\sigma^2 \\
 &+\frac{C}{T}
   \sum_{t=0}^{T-1}\mathbb E\widehat S_t(r_t).
\end{aligned}
\label{eq:quasar-joint-bound}
\end{equation}
\end{theorem}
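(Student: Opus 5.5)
The proof has two parts, matching the two claims of Theorem~\ref{thm:quasar-certificate}.

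\emph{Mismatch bound.} Write the gradient mismatch via the fundamental theorem of calculus along the segment from $w_t$ to $r_t$, which is legitimate because $\loss$ is $C^2$ by Assumption~1:
\[
e_t=\nabla\loss(r_t)-\nabla\loss(w_t)=\int_0^1\nabla^2\loss\bigl(w_t+u(r_t-w_t)\bigr)(r_t-w_t)\,du .
\]
By Jensen (convexity of $\|\cdot\|^2$), $\|e_t\|^2\le\int_0^1\|\nabla^2\loss(w_t+u\Delta_t)\Delta_t\|^2du$. Since $r_t\in\mathcal R_t$, Assumption~3 bounds this by $C\,\widehat S_t(r_t)$. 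The selection rule \eqref{eq:quasar-selection} gives the equality with $C\min_{r}\widehat S_t(r)$. This part is routine.

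\emph{Convergence bound.} I would use the standard descent lemma from $L$-smoothness, applied at $w_t$ with step $-\eta G_t$:
\[
\loss(w_{t+1})\le\loss(w_t)-\eta\langle\nabla\loss(w_t),G_t\rangle+\tfrac{L\eta^2}{2}\|G_t\|^2 .
\]
Take $\mathbb E_t$. Since $r_t$ is $\mathcal H_t$-measurable and $\mathbb E_t\xi_t=0$, the inner product becomes $\langle\nabla\loss(w_t),\nabla\loss(r_t)\rangle$. For this term use the polarization identity
\[
2\langle a,b\rangle=\|a\|^2+\|b\|^2-\|a-b\|^2, \qquad a=\nabla\loss(w_t),\; b=\nabla\loss(r_t),
\]
so that $-\eta\langle a,b\rangle=-\tfrac{\eta}{2}\|a\|^2-\tfrac{\eta}{2}\|b\|^2+\tfrac{\eta}{2}\|e_t\|^2$. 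The second-moment term expands as $\mathbb E_t\|G_t\|^2=\|b\|^2+\mathbb E_t\|\xi_t\|^2\le(1+M)\|b\|^2+\sigma^2$ by Assumption~2. Combining these,
\[
\mathbb E_t\loss(w_{t+1})\le\loss(w_t)-\tfrac{\eta}{2}\|a\|^2-\tfrac{\eta}{2}\bigl(1-L\eta(1+M)\bigr)\|b\|^2+\tfrac{\eta}{2}\|e_t\|^2+\tfrac{L\eta^2\sigma^2}{2}.
\]
Insert the mismatch bound $\|e_t\|^2\le C\widehat S_t(r_t)$. Then take total expectations, sum over $t=0,\dots,T-1$, telescope, and use $\loss(w_T)\ge\loss^\star$. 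Multiplying by $2/(\eta T)$ yields exactly \eqref{eq:quasar-joint-bound}. The step-size condition $\eta<1/(L(1+M))$ only ensures the coefficient of $\|\nabla\loss(r_t)\|^2$ is positive, so the left side is a meaningful nonnegative quantity.

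\emph{Main obstacle.} The main obstacle is the handling of the cross term. A naive Young's inequality would lose constants and fail to reproduce the clean coefficients $1$ and $(1-L\eta(1+M))$. The polarization identity is the choice that makes the constants match, so I would commit to it.

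A minor care point is measurability. The bound requires $r_t$ and $\widehat S_t(r_t)$ to be determined by $\mathcal H_t$, so that conditioning leaves $\nabla\loss(r_t)$ unchanged. The setup stipulates this, and under that stipulation the argument goes through.
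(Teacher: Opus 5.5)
Your proposal is correct and matches the paper's proof step for step. The mismatch bound uses the fundamental theorem of calculus, Jensen and Assumption~3, and the convergence bound uses the $L$-smoothness descent inequality, the conditional second-moment bound $\mathbb{E}_t\|G_t\|^2\le(1+M)\|\nabla\loss(r_t)\|^2+\sigma^2$ and the polarization identity for the cross term, followed by telescoping and rescaling by $2/(\eta T)$.
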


The bound separates three terms: initialization, minibatch noise, and
reconstruction error. The initialization term decreases with training time,
while the noise term is the usual stochastic-optimization penalty. The
remaining term is the penalty associated with weight reconstruction, which
QUASAR minimizes at each step. Thus, lower reconstruction error gives a tighter
guarantee that both the latent and reconstructed weights approach stationarity.

\subsubsection{Comparison with Standard and Biased SGD}

To compare the guarantees, define
\[
    B_T
    :=
    \frac{\loss(w_0)-\loss^\star}{\eta T}+\eta\sigma^2,
    \qquad
    \overline S_T
    :=
    \frac1T\sum_{t=0}^{T-1}\mathbb E\widehat S_t(r_t).
\]
Up to constant factors, the relevant bounds are
\begin{equation}
\begin{aligned}
\text{Standard SGD:}\quad
&\frac1T\sum_{t=0}^{T-1}
  \mathbb E\|\nabla\loss(w_t)\|^2
  \lesssim B_T, \\
\text{Biased SGD:}\quad
&\frac1T\sum_{t=0}^{T-1}
  \mathbb E\|\nabla\loss(w_t)\|^2
  \lesssim \frac{B_T+\zeta^2}{1-m}, \\
\text{QUASAR:}\quad
&\frac1T\sum_{t=0}^{T-1}\mathbb E\!\left[
  \|\nabla\loss(w_t)\|^2+\|\nabla\loss(r_t)\|^2
  \right]
  \lesssim B_T+C\overline S_T.
\end{aligned}
\label{eq:sgd-comparison-summary}
\end{equation}
Here, biased-SGD analysis assumes
$\|b_t\|^2\leq m\|\nabla\loss(w_t)\|^2+\zeta^2$ with $0\leq m<1$
\citep{ajalloeian2020convergence}; in QAT, this bias is the gradient mismatch
$b_t=e_t$. Biased SGD treats the bias as given; in QAT it comes from the
reconstruction map, which we control. Our bound therefore replaces the fixed
allowance $\zeta^2$ with the realized error $C\overline S_T$, which QUASAR
minimizes at every step, and it holds at both $w_t$ and the deployed weights
$r_t$. The rate is standard; the penalty term is now something training can
act on.

\subsection{Connecting Reconstruction Error to Final Quantized Loss}
\label{sec:theory-final}

The previous theorem shows how reconstruction error affects convergence during
training. For deployment, however, what matters is the loss of the final
quantized reconstruction $r_T$, not only the stationarity of the latent weights. Under an additional PL condition, the next result connects the
two by providing a direct bound on the final model's loss.

\begin{corollary}[Final quantized model]
\label{cor:quasar-final-model}
Under Assumptions~1--4, suppose
\[
    0<\eta<
    \min\!\left\{
        \frac{1}{L(1+M)},
        \frac{1}{\mu}
    \right\}.
\]
Then
\begin{equation}
\begin{aligned}
 \mathbb E[\loss(r_T)-\loss^\star]
 \leq{}&
 \frac{2L}{\mu}(1-\eta\mu)^T
   (\loss(w_0)-\loss^\star) \\
 &+\frac{L\eta C}{\mu}
   \sum_{t=0}^{T-1}
   (1-\eta\mu)^{T-1-t}
   \mathbb E\widehat S_t(r_t) \\
 &+\frac{L^2\eta\sigma^2}{\mu^2}
   \bigl(1-(1-\eta\mu)^T\bigr)
 +\frac{C}{\mu}
   \mathbb E\widehat S_T(r_T).
\end{aligned}
\label{eq:quasar-final-bound}
\end{equation}
\end{corollary}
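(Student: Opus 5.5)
The plan is to derive a one-step descent inequality for the latent iterates $w_t$, contract it with the PL condition into a linear recursion for $D_t:=\mathbb E[\loss(w_t)-\loss^\star]$, unroll it to time $T$, and transfer the bound from $w_T$ to the deployed weights $r_T$ using smoothness and the mismatch bound \eqref{eq:quasar-mismatch-certificate} of Theorem~\ref{thm:quasar-certificate}. The constants $2L/\mu$ and $C/\mu$ in \eqref{eq:quasar-final-bound} suggest exactly this last transfer step.

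\textbf{Step 1 (descent inequality).} By $L$-smoothness,
$\loss(w_{t+1})\le \loss(w_t)-\eta\langle\nabla\loss(w_t),G_t\rangle+\tfrac{L\eta^2}{2}\|G_t\|^2$.
Taking $\mathbb E_t$ and using Assumption~2 gives $\mathbb E_t\|G_t\|^2\le(1+M)\|\nabla\loss(r_t)\|^2+\sigma^2$. For the cross term, I will use the polarization identity
$-\langle a,b\rangle=\tfrac12\bigl(\|a-b\|^2-\|a\|^2-\|b\|^2\bigr)$
with $a=\nabla\loss(w_t)$ and $b=\nabla\loss(r_t)$, so that $a-b=-e_t$. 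This yields
\[
\mathbb E_t\loss(w_{t+1})\le \loss(w_t)-\tfrac{\eta}{2}\|\nabla\loss(w_t)\|^2-\tfrac{\eta}{2}\bigl(1-L\eta(1+M)\bigr)\|\nabla\loss(r_t)\|^2+\tfrac{\eta}{2}\|e_t\|^2+\tfrac{L\eta^2\sigma^2}{2}.
\]
This is the same inequality that underlies \eqref{eq:quasar-joint-bound}, so I can take it from the proof of Theorem~\ref{thm:quasar-certificate}.

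\textbf{Step 2 (PL recursion).} Because $\eta<1/(L(1+M))$, the $\|\nabla\loss(r_t)\|^2$ term is nonpositive and can be dropped. I then bound $\|e_t\|^2\le C\,\widehat S_t(r_t)$ via \eqref{eq:quasar-mismatch-certificate}. Applying Assumption~4 to $\|\nabla\loss(w_t)\|^2$ and taking total expectations gives
\[
D_{t+1}\le(1-\eta\mu)D_t+\tfrac{\eta C}{2}\,\mathbb E\widehat S_t(r_t)+\tfrac{L\eta^2\sigma^2}{2}.
\]
Since $\eta<1/\mu$, the factor $1-\eta\mu$ lies in $(0,1)$, so unrolling is legitimate. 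Unrolling and summing the geometric series in the noise term gives
\[
D_T\le(1-\eta\mu)^TD_0+\tfrac{\eta C}{2}\sum_{t=0}^{T-1}(1-\eta\mu)^{T-1-t}\mathbb E\widehat S_t(r_t)+\tfrac{L\eta\sigma^2}{2\mu}\bigl(1-(1-\eta\mu)^T\bigr).
\]

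\textbf{Step 3 (transfer to $r_T$).} By PL at $r_T$, $\loss(r_T)-\loss^\star\le\|\nabla\loss(r_T)\|^2/(2\mu)$. I split
$\|\nabla\loss(r_T)\|^2\le2\|\nabla\loss(w_T)\|^2+2\|e_T\|^2$.
For the first term I use the standard consequence of $L$-smoothness and the lower bound on $\loss$, namely $\|\nabla\loss(w_T)\|^2\le2L(\loss(w_T)-\loss^\star)$. For the second I use Assumption~3 at step $T$, i.e.\ the mismatch bound applied at time $T$, to get $\|e_T\|^2\le C\widehat S_T(r_T)$. Together these give
\[
\mathbb E[\loss(r_T)-\loss^\star]\le\tfrac{2L}{\mu}D_T+\tfrac{C}{\mu}\mathbb E\widehat S_T(r_T).
\]
Substituting Step 2, the prefactor $2L/\mu$ converts $\eta C/2$ into $L\eta C/\mu$ and $L\eta\sigma^2/(2\mu)$ into $L^2\eta\sigma^2/\mu^2$, which reproduces \eqref{eq:quasar-final-bound} exactly.

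\textbf{Main obstacle.} I expect the main difficulty to be bookkeeping rather than anything deep. First, Assumption~3 and the mismatch bound must be invoked at the terminal index $T$, not only for $t<T$. Second, conditional expectations must be handled correctly: $r_t$ and $\widehat S_t$ are $\mathcal H_t$-measurable, so they pass through $\mathbb E_t$. Third, the smoothness-based bound $\|\nabla\loss\|^2\le2L(\loss-\loss^\star)$ must be justified; it follows from a gradient step on $\loss$ together with $\loss^\star>-\infty$.
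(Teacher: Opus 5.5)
Your proposal is correct and matches the paper's proof essentially step for step. It uses the same one-step descent inequality from the proof of Theorem~\ref{thm:quasar-certificate}, the same PL contraction with geometric unrolling for $w_T$, and the same transfer to $r_T$ via PL at $r_T$, the split $\norm{x+y}^2\le 2\norm{x}^2+2\norm{y}^2$, the bound $\norm{\nabla\loss(w_T)}^2\le 2L(\loss(w_T)-\loss^\star)$, and the mismatch bound applied at the terminal index, with the constants working out exactly as you computed.
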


In short, the result formalizes the central intuition behind QUASAR: better
reconstructions throughout training translate into a tighter guarantee for the
final quantized model's loss. Proofs, the sufficient-condition derivation for $C$,
and the supporting SGD and curvature analyses are deferred to the appendix.

\subsection{An Empirical Link Between Reconstruction Error and Final KL}

The theoretical analysis motivates the expectation that lower reconstruction error should
accompany better quantization-aware distillation. Across Standard QAT,
Denoising QAT, and QUASAR, both model families, and all tested bit widths, the
reconstruction error tracks held-out KL between the quantized model and its full-precision counterpart; QUASAR has
the lowest value of both in every setting (Figure~\ref{fig:theory-tie}). Because the theorem analyzes SGD whereas the main
experiments use AdamW, we also repeat the INT2 comparison under plain SGD; the
ordering is unchanged at every stable learning rate
(Figure~\ref{fig:sgd-ladder}, Appendix~\ref{app:sgd-validation}).

\begin{figure}[t]
\centering
\includegraphics[width=0.98\linewidth]{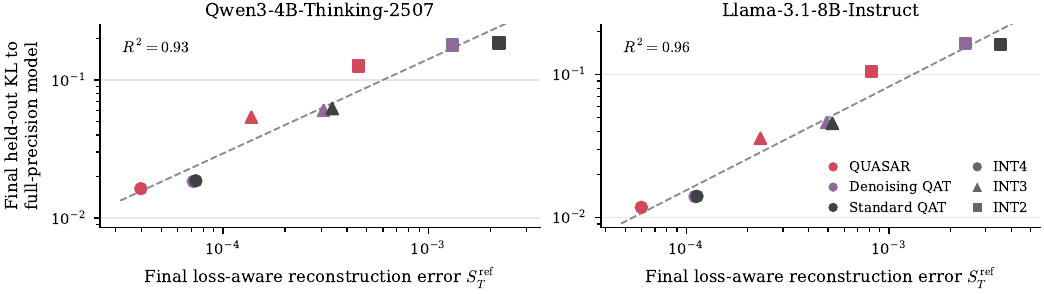}
\caption{\textbf{Reconstruction error tracks final KL loss between the quantized model and its full-precision counterpart.}
End-of-training loss-aware reconstruction error versus final held-out KL to the
full-precision teacher for Standard QAT, Denoising QAT, and QUASAR at INT4,
INT3, and INT2.}
\label{fig:theory-tie}
\end{figure}

\FloatBarrier

\section{Experiments}\label{sec:heal-distilled}

In this section, we evaluate the effectiveness of QUASAR and compare against strong QAT and PTQ baselines. Our experiments aim to answer three research questions:
\begin{enumerate}\itemsep0.15em
\item[\textbf{Q1.}] How well does QUASAR heal a quantized model by
distilling from its full-precision counterpart, compared with existing QAT
and PTQ methods?
\item[\textbf{Q2.}] How well can a quantized model learn new tasks and
capabilities with QUASAR, compared with other QAT methods and with
full-precision fine-tuning followed by PTQ?
\item[\textbf{Q3.}] Is QUASAR effective across quantization formats,
including INT4, INT3, INT2, and NVFP4?
\end{enumerate}

These questions address three practical requirements of low-bit deployment: preserving the quality of pretrained models, learning new capabilities without a lossy post-training conversion, and supporting the numerical formats native to deployment hardware. For \textbf{Q1}, quantization-aware distillation (QAD) aims to heal an already trained model in low precision, recovering as much of its full-precision quality as possible. This is especially important at two and three bits, where inference-friendly PTQ methods often degrade model quality substantially. For \textbf{Q2}, QAT enables a model to learn new tasks directly in the low-bit weights used at inference, producing a checkpoint that can be deployed without an additional quantization step. By contrast, full-precision fine-tuning followed by PTQ may weaken the newly acquired capabilities during the final, lossy conversion. For \textbf{Q3}, support for multiple numerical formats is necessary because different hardware platforms and serving scenarios impose different quantization requirements. We therefore evaluate whether QUASAR remains effective across INT4, INT3, INT2, and the NVFP4 format.

\begin{figure}[!t]
\centering
\includegraphics[width=0.98\linewidth]{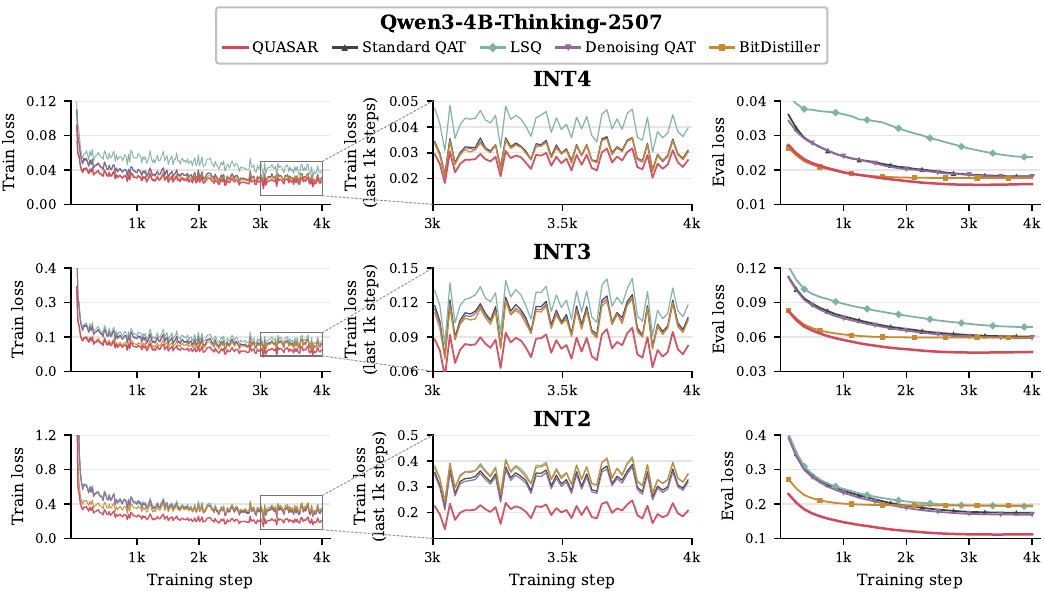}
\caption{Quantization-aware distillation of Qwen3-4B-Thinking-2507 at
INT4/INT3/INT2 on Open-PerfectBlend data using different QAT methods, with the full-precision model as teacher. From left to right:
training loss over the full run, a zoomed-in view of the final 1{,}000 steps,
and eval loss on held-out data. Training/eval loss is the forward KL between the
quantized model and its full-precision counterpart.}
\label{fig:heald-curves}
\end{figure}

\subsection{Experimental Setup}

\textbf{Common setup.}
Unless stated otherwise, we use asymmetric weight-only INT4, INT3, or INT2 quantization with group size 128, with one scale and zero-point per weight group. We quantize all linear projections in every transformer block, while keeping activations, embeddings, the language modeling head, and norms in BF16. For QAT, all weights are trained; for QAD, non-quantized modules are frozen to prevent student drift. Within each setting, all QAT methods use AdamW with the same data, objective, schedule, and token budget (4{,}096 steps for integer experiments). We use a single learning rate per model and bit width across methods: $2/3/5\times10^{-5}$ for Qwen3-4B and $1/2/3\times10^{-5}$ for Llama-3.1-8B at INT4/INT3/INT2, with global batch sizes 32 and 128, respectively. Adaptation uses $5\times10^{-5}$ with batch size 8, while NVFP4 QAD uses $1\times10^{-6}$ with batch size 32. These batch sizes correspond to roughly 60k supervised tokens per step. For each model and bit width, we swept learning rates and selected the largest value stable across all methods. Higher rates tended to destabilize LSQ, while lower rates under-trained all methods; Llama required slightly lower rates than Qwen. QUASAR achieved the best performance across all learning rates evaluated. QUASAR uses AdamW's second moment as the saliency $h$ and searches clipping-range factors from $0.30$ to $1.00$ in increments of $0.05$.

\begin{figure}[!t]
\centering
\includegraphics[width=0.98\linewidth]{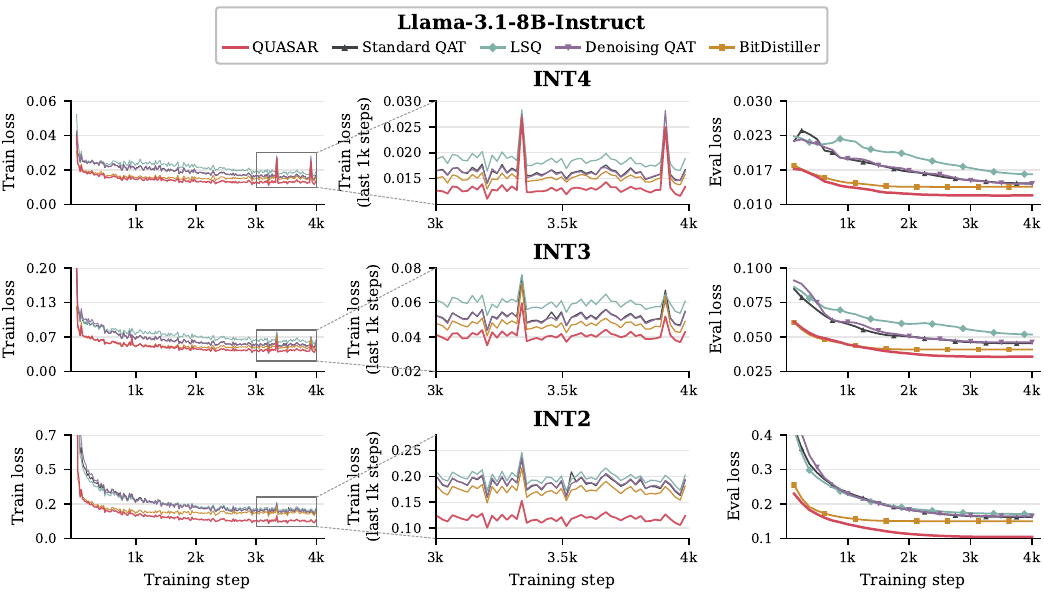}
\caption{Quantization-aware distillation of Llama-3.1-8B-Instruct at
INT4/INT3/INT2 on Open-PerfectBlend data using different QAT methods, with the full-precision model as teacher. From left to right:
training loss over the full run, a zoomed-in view of the final 1{,}000 steps,
and eval loss on held-out data. Training/eval loss is the forward KL between the
quantized model and its full-precision counterpart.}
\label{fig:heald-curves-llama}
\end{figure}

\textbf{Healing setup.}
The healing experiments perform QAD on two models: Qwen3-4B-Thinking-2507~\cite{qwen3} and
Llama-3.1-8B-Instruct~\citep{llama3}. The full-precision model acts as the teacher for distillation
and its logits are used to train the quantized counterpart. 
We use the prompts from the Open-PerfectBlend dataset~\citep{openperfectblend} to generate the teacher responses for distillation. We
freeze the non-quantized modules and train the quantized model for
4096 steps (enough for all methods to converge) at 4096 sequence length and around 246M tokens using forward KL loss.

\textbf{Adaptation setup.}
The adaptation experiments use Qwen3-4B-Base. Its pretraining provides enough
mathematical ability to make the setting informative, while the base
checkpoint still leaves a clear capability gap: it follows no instructions,
so supervised fine-tuning has to teach both instruction following and
long-form mathematical reasoning. We train all weights with
the cross-entropy loss on OpenMathReasoning~\citep{openmathreasoning}, at sequence length 18{,}432 and learning
rate $5\times10^{-5}$.

\textbf{NVFP4 setup.}
The NVFP4 experiments perform QAD for Qwen3-8B and Qwen3.5-9B on
Open-PerfectBlend, with the same recipe as the healing experiments.
Training uses simulated quantization. For downstream evaluation we
materialize real NVFP4 checkpoints and run them through the native NVFP4 (W4A4) path
of the vLLM~\citep{pagedattention} engine, so the reported quality is the
quality of the deployed artifact rather than of a simulation.

\begin{table}[t]\centering\scriptsize
\renewcommand{\arraystretch}{0.85}
\caption{Evaluation of QAD and PTQ methods for Qwen3-4B-Thinking-2507 at INT4, INT3, and INT2 precision. PTQ directly quantizes the full-precision model, whereas QAT trains the quantized model with the full-precision model as its teacher. We report KL divergence and top-1 agreement with the full-precision model on held-out data, along with accuracy on eight downstream benchmarks.
}
\label{tab:heald-tasks-qwen}
\setlength{\tabcolsep}{2pt}
\begin{tabular}{l >{\columncolor{black!8}\raggedleft\arraybackslash}p{0.058\textwidth} >{\columncolor{black!8}\raggedleft\arraybackslash}p{0.058\textwidth} *{8}{>{\raggedleft\arraybackslash}p{0.058\textwidth}} >{\columncolor{black!8}\raggedleft\arraybackslash}p{0.058\textwidth}}
\toprule
& \multicolumn{2}{c}{Fidelity} & \multicolumn{9}{c}{Downstream task accuracy (\%) $\uparrow$} \\
\cmidrule(lr){2-3}\cmidrule(lr){4-12}
Method & \multicolumn{1}{>{\columncolor{black!8}}c}{KL $\downarrow$} & \multicolumn{1}{>{\columncolor{black!8}}c}{Top-1 $\uparrow$} & \multicolumn{1}{c}{GSM8K} & \multicolumn{1}{c}{MMLU} & \multicolumn{1}{c}{ARC-C} & \multicolumn{1}{c}{ARC-E} & \multicolumn{1}{c}{Hella.} & \multicolumn{1}{c}{WinoG.} & \multicolumn{1}{c}{TQA} & \multicolumn{1}{c}{IFEval} & \multicolumn{1}{>{\columncolor{black!8}}c}{\textbf{Avg.}} \\
\midrule
Teacher (FP16) & -- & -- & 87.0 & 68.7 & 53.1 & 76.7 & 65.7 & 66.1 & 57.5 & 54.3 & 66.1 \\
\midrule
\multicolumn{12}{l}{\textit{2-bit (INT2)}}\\
\addlinespace[1pt]
RTN & 11.166 & 0.5 & 0.0 & 25.4 & 25.6 & 25.9 & 26.0 & 52.0 & 46.8 & 8.3 & 26.3 \\
GPTQ & 1.140 & 67.7 & 0.2 & 23.4 & 25.1 & 30.0 & 30.1 & 49.3 & \textbf{52.9} & 8.9 & 27.5 \\
AWQ & 3.016 & 37.0 & 0.0 & 23.4 & 23.2 & 31.6 & 29.1 & 51.1 & \underline{52.8} & 9.6 & 27.6 \\
\arrayrulecolor{black!25}\cmidrule[0.4pt](lr){1-12}\arrayrulecolor{black}
Standard QAT & 0.186 & 89.9 & 47.4 & 34.5 & 31.2 & 44.8 & 46.9 & 54.5 & 51.1 & 33.5 & 43.0 \\
LSQ & 0.205 & 89.2 & 42.6 & 34.6 & 30.6 & 44.9 & 47.6 & 55.2 & 47.2 & 35.1 & 42.2 \\
Denoising QAT & \underline{0.179} & \underline{90.1} & 45.6 & 36.0 & 30.1 & 47.9 & 46.7 & 56.1 & 49.9 & 34.9 & 43.4 \\
BitDistiller & 0.209 & 89.1 & \underline{49.0} & \underline{43.2} & \underline{38.0} & \textbf{60.1} & \underline{52.6} & \underline{57.9} & 52.6 & \underline{37.7} & \underline{48.9} \\
\rowcolor{oursrow}[\dimexpr\tabcolsep+1pt\relax][\dimexpr\tabcolsep+1pt\relax]
\textbf{QUASAR}~(ours) & \textbf{0.126} & \textbf{92.0} & \textbf{68.8} & \textbf{48.9} & \textbf{38.8} & \underline{55.9} & \textbf{56.4} & \textbf{59.2} & 50.5 & \textbf{47.5} & \textbf{53.2} \\
\midrule
\multicolumn{12}{l}{\textit{3-bit (INT3)}}\\
\addlinespace[1pt]
RTN & 0.597 & 78.8 & 20.5 & 53.6 & 39.4 & 56.1 & 54.1 & 58.1 & 51.4 & 18.9 & 44.0 \\
GPTQ & 0.130 & 91.5 & 77.3 & 61.0 & 48.0 & \underline{75.5} & 61.0 & 61.8 & 54.0 & 49.2 & 61.0 \\
AWQ & 0.240 & 88.1 & 73.2 & 60.1 & 46.4 & 72.3 & 60.7 & 62.7 & 49.2 & 41.8 & 58.3 \\
\arrayrulecolor{black!25}\cmidrule[0.4pt](lr){1-12}\arrayrulecolor{black}
Standard QAT & 0.062 & 94.2 & 78.5 & 62.0 & 46.8 & 74.1 & 61.9 & 64.4 & \underline{56.1} & \underline{53.2} & 62.1 \\
LSQ & 0.070 & 93.8 & \textbf{83.0} & 62.3 & \textbf{49.4} & \textbf{75.8} & 62.7 & \textbf{65.6} & 55.4 & 52.9 & \textbf{63.4} \\
Denoising QAT & \underline{0.060} & \underline{94.3} & 80.4 & 62.7 & 46.2 & 73.5 & 61.6 & 64.2 & \underline{56.1} & 51.0 & 62.0 \\
BitDistiller & 0.065 & 94.2 & 81.0 & \underline{63.7} & 47.0 & 69.0 & \underline{63.0} & 63.5 & \textbf{56.3} & \textbf{53.4} & 62.1 \\
\rowcolor{oursrow}[\dimexpr\tabcolsep+1pt\relax][\dimexpr\tabcolsep+1pt\relax]
\textbf{QUASAR}~(ours) & \textbf{0.054} & \textbf{94.9} & \underline{81.7} & \textbf{64.5} & \underline{48.3} & 72.9 & \textbf{63.2} & \underline{64.7} & 55.5 & 50.6 & \underline{62.7} \\
\midrule
\multicolumn{12}{l}{\textit{4-bit (INT4)}}\\
\addlinespace[1pt]
RTN & 0.108 & 92.2 & 80.2 & 66.1 & 48.8 & 69.7 & 63.5 & 65.4 & \textbf{58.4} & 54.2 & 63.3 \\
GPTQ & 0.027 & 96.2 & 84.9 & 67.1 & 51.4 & \underline{75.0} & 64.7 & 65.4 & 56.5 & 51.4 & 64.6 \\
AWQ & 0.050 & 94.7 & 84.6 & 67.6 & \underline{51.5} & 74.9 & \textbf{65.2} & \textbf{66.6} & 56.7 & \underline{56.6} & \underline{65.5} \\
\arrayrulecolor{black!25}\cmidrule[0.4pt](lr){1-12}\arrayrulecolor{black}
Standard QAT & 0.019 & \underline{96.9} & 84.9 & 67.0 & 49.7 & 73.5 & \underline{65.1} & 65.2 & 56.7 & 54.3 & 64.6 \\
LSQ & 0.024 & 96.4 & \underline{86.0} & 66.8 & 50.6 & \underline{75.0} & \textbf{65.2} & 65.6 & 57.1 & 56.4 & 65.3 \\
Denoising QAT & \underline{0.018} & \underline{96.9} & 85.0 & 67.1 & 49.2 & 72.9 & 64.7 & 65.7 & 57.3 & \textbf{56.9} & 64.9 \\
BitDistiller & \underline{0.018} & \underline{96.9} & 84.8 & \underline{67.7} & \textbf{52.6} & \textbf{76.0} & 64.8 & \underline{65.9} & 57.3 & 54.3 & 65.4 \\
\rowcolor{oursrow}[\dimexpr\tabcolsep+1pt\relax][\dimexpr\tabcolsep+1pt\relax]
\textbf{QUASAR}~(ours) & \textbf{0.016} & \textbf{97.1} & \textbf{86.4} & \textbf{68.0} & 50.6 & \textbf{76.0} & \underline{65.1} & 65.2 & \underline{57.7} & 55.8 & \textbf{65.6} \\
\bottomrule
\end{tabular}
\end{table}

\textbf{QAT and PTQ baselines.}
We compare QUASAR with several competitive QAT and PTQ baselines. For QAT, we include Standard QAT (which determines the scale factors and zero-points based on each group's minimum and maximum values), LSQ~\citep{lsq} (which jointly learns the quantization scale factors), Denoising QAT~\citep{baseline} (which fits the dequantization parameters with a ridge regression objective), and BitDistiller~\citep{bitdistiller} (which combines tailored asymmetric quantization and clipping with self-distillation). For NVFP4, we use Standard QAT as the baseline; this matches the NVFP4 QAD recipe~\citep{nvfp4qad}. For PTQ, we include RTN (round to nearest), GPTQ~\citep{gptq} (which uses second-order information to reduce quantization error), and AWQ~\citep{awq} (which uses activation statistics to identify and protect salient weights). All QAT and PTQ methods use the same inference data format for deployment, where each weight group is represented by low-bit integer codes, one scale, and one zero-point. In healing, PTQ methods quantize the full-precision model directly. In adaptation, they quantize the model after full-precision fine-tuning, which we also report without quantization as FP-SFT.

\textbf{Evaluation.}
For healing, we evaluate teacher--student alignment on 128 samples of held-out data using forward KL and top-1 agreement rate, and report accuracy on eight benchmarks: GSM8K~\citep{gsm8k}, MMLU~\citep{mmlu}, ARC-Challenge and ARC-Easy~\citep{arc}, HellaSwag~\citep{hellaswag}, WinoGrande~\citep{winogrande}, TruthfulQA~\citep{truthfulqa}, and IFEval~\citep{ifeval}. KL closely tracks task accuracy (Appendix~\ref{app:healing}, Figure~\ref{fig:kl-vs-acc}). At two bits, we additionally compare responses on 128 held-out chat prompts per model using Llama-3.3-70B-Instruct as a judge. For adaptation, we report perplexity on held-out OpenMathReasoning data and avg@32 accuracy at temperature 0.6 on MATH-500~\citep{math,letsverify}, GSM8K, AIME 2024/2025~\citep{aime}, and HMMT 2025~\citep{hmmt}. For NVFP4, we use the healing metrics and additionally report GPQA-Diamond~\citep{gpqa}.

\subsection{Results on Healing with Quantization-aware Distillation}\label{sec:exp-healing}

\begin{table}[t]\centering\scriptsize
\renewcommand{\arraystretch}{0.85}
\caption{Evaluation of QAD and PTQ methods for Llama-3.1-8B-Instruct at INT4, INT3, and INT2 precision. PTQ directly quantizes the full-precision model, whereas QAT trains the quantized model with the full-precision model as its teacher. We report KL divergence and top-1 agreement with the full-precision model on held-out data, along with accuracy on eight downstream benchmarks.
}
\label{tab:heald-tasks-llama}
\setlength{\tabcolsep}{2pt}
\begin{tabular}{l >{\columncolor{black!8}\raggedleft\arraybackslash}p{0.058\textwidth} >{\columncolor{black!8}\raggedleft\arraybackslash}p{0.058\textwidth} *{8}{>{\raggedleft\arraybackslash}p{0.058\textwidth}} >{\columncolor{black!8}\raggedleft\arraybackslash}p{0.058\textwidth}}
\toprule
& \multicolumn{2}{c}{Fidelity} & \multicolumn{9}{c}{Downstream task accuracy (\%) $\uparrow$} \\
\cmidrule(lr){2-3}\cmidrule(lr){4-12}
Method & \multicolumn{1}{>{\columncolor{black!8}}c}{KL $\downarrow$} & \multicolumn{1}{>{\columncolor{black!8}}c}{Top-1 $\uparrow$} & \multicolumn{1}{c}{GSM8K} & \multicolumn{1}{c}{MMLU} & \multicolumn{1}{c}{ARC-C} & \multicolumn{1}{c}{ARC-E} & \multicolumn{1}{c}{Hella.} & \multicolumn{1}{c}{WinoG.} & \multicolumn{1}{c}{TQA} & \multicolumn{1}{c}{IFEval} & \multicolumn{1}{>{\columncolor{black!8}}c}{\textbf{Avg.}} \\
\midrule
Teacher (FP16) & -- & -- & 70.1 & 68.3 & 55.6 & 79.9 & 79.5 & 73.6 & 54.5 & 73.9 & 69.4 \\
\midrule
\multicolumn{12}{l}{\textit{2-bit (INT2)}}\\
\addlinespace[1pt]
RTN & 10.810 & 0.5 & 0.0 & 25.5 & 26.2 & 26.1 & 26.1 & 51.8 & 47.6 & 11.8 & 26.9 \\
GPTQ & 2.380 & 51.8 & 0.0 & 24.9 & 23.3 & 28.9 & 30.0 & 48.0 & 49.0 & 9.1 & 26.7 \\
AWQ & 6.769 & 12.9 & 0.0 & 23.6 & 23.3 & 25.9 & 27.2 & 48.2 & 48.8 & 7.9 & 25.6 \\
\arrayrulecolor{black!25}\cmidrule[0.4pt](lr){1-12}\arrayrulecolor{black}
Standard QAT & 0.163 & 90.1 & 1.8 & 23.4 & 22.9 & 30.9 & 34.8 & 49.3 & 46.8 & 44.4 & 31.8 \\
LSQ & 0.172 & 89.6 & 42.2 & 39.7 & 32.8 & 52.8 & 55.7 & 53.5 & 47.4 & 42.1 & 45.8 \\
Denoising QAT & 0.165 & 90.1 & 0.0 & 24.6 & 26.2 & 25.6 & 26.9 & 50.0 & \underline{49.1} & 40.9 & 30.4 \\
BitDistiller & \underline{0.151} & \underline{90.2} & \underline{53.1} & \underline{49.3} & \underline{43.5} & \underline{69.9} & \underline{66.6} & \underline{66.3} & 47.7 & \underline{51.8} & \underline{56.0} \\
\rowcolor{oursrow}[\dimexpr\tabcolsep+1pt\relax][\dimexpr\tabcolsep+1pt\relax]
\textbf{QUASAR}~(ours) & \textbf{0.105} & \textbf{92.0} & \textbf{66.4} & \textbf{52.5} & \textbf{43.8} & \textbf{73.1} & \textbf{68.1} & \textbf{66.8} & \textbf{49.5} & \textbf{56.0} & \textbf{59.5} \\
\midrule
\multicolumn{12}{l}{\textit{3-bit (INT3)}}\\
\addlinespace[1pt]
RTN & 0.276 & 85.6 & 19.0 & 44.8 & 40.2 & 64.6 & 69.5 & 66.9 & 47.0 & 52.9 & 50.6 \\
GPTQ & 0.076 & 92.8 & 61.4 & 59.2 & 49.1 & 71.6 & 75.7 & 71.0 & \textbf{53.1} & 69.1 & 63.8 \\
AWQ & 0.197 & 88.0 & 23.9 & 53.2 & 43.0 & 67.6 & 72.2 & 67.6 & 42.1 & 58.6 & 53.5 \\
\arrayrulecolor{black!25}\cmidrule[0.4pt](lr){1-12}\arrayrulecolor{black}
Standard QAT & 0.046 & 94.7 & 70.7 & 59.7 & 47.4 & 72.8 & 75.2 & \textbf{72.9} & 52.1 & 69.9 & 65.1 \\
LSQ & 0.052 & 94.2 & \textbf{76.8} & \underline{62.3} & 51.8 & 76.9 & 75.8 & 71.6 & 50.9 & 67.5 & 66.7 \\
Denoising QAT & 0.046 & 94.7 & 68.8 & 59.8 & 48.0 & 74.5 & 75.0 & \underline{72.2} & 51.0 & \underline{70.2} & 64.9 \\
BitDistiller & \underline{0.041} & \underline{94.8} & \underline{74.2} & 61.6 & \underline{53.1} & \underline{79.0} & \underline{76.4} & 71.9 & \underline{52.2} & 68.6 & \underline{67.1} \\
\rowcolor{oursrow}[\dimexpr\tabcolsep+1pt\relax][\dimexpr\tabcolsep+1pt\relax]
\textbf{QUASAR}~(ours) & \textbf{0.036} & \textbf{95.2} & 74.1 & \textbf{63.3} & \textbf{54.9} & \textbf{79.1} & \textbf{77.1} & 71.0 & 51.6 & \textbf{71.5} & \textbf{67.9} \\
\midrule
\multicolumn{12}{l}{\textit{4-bit (INT4)}}\\
\addlinespace[1pt]
RTN & 0.044 & 94.4 & 70.7 & 64.1 & 53.3 & 76.9 & 78.0 & 73.9 & 50.7 & 72.1 & 67.5 \\
GPTQ & 0.015 & 96.8 & 69.0 & \textbf{67.3} & 52.0 & 77.3 & \underline{78.7} & 73.2 & \textbf{55.1} & \underline{73.6} & 68.3 \\
AWQ & 0.032 & 95.2 & 65.6 & 66.6 & 53.7 & 78.3 & \underline{78.7} & 72.4 & 53.7 & 73.4 & 67.8 \\
\arrayrulecolor{black!25}\cmidrule[0.4pt](lr){1-12}\arrayrulecolor{black}
Standard QAT & 0.014 & \underline{97.0} & \underline{76.3} & 66.3 & 54.4 & 77.7 & 78.4 & 73.8 & 53.9 & 73.0 & 69.2 \\
LSQ & 0.016 & 96.8 & 73.2 & 66.5 & 54.6 & \textbf{80.1} & \underline{78.7} & \underline{74.1} & 53.2 & \textbf{74.1} & \underline{69.3} \\
Denoising QAT & 0.014 & \underline{97.0} & \textbf{77.0} & 66.2 & \textbf{55.7} & 77.8 & 78.6 & 73.9 & 53.6 & 72.1 & \underline{69.3} \\
BitDistiller & \underline{0.013} & \underline{97.0} & 70.4 & \underline{67.0} & 55.1 & \underline{79.7} & \textbf{78.8} & 73.3 & 53.8 & \textbf{74.1} & 69.0 \\
\rowcolor{oursrow}[\dimexpr\tabcolsep+1pt\relax][\dimexpr\tabcolsep+1pt\relax]
\textbf{QUASAR}~(ours) & \textbf{0.012} & \textbf{97.3} & 74.6 & 66.5 & \underline{55.5} & 78.9 & \textbf{78.8} & \textbf{74.2} & \underline{54.2} & 72.5 & \textbf{69.4} \\
\bottomrule
\end{tabular}
\end{table}

Figures~\ref{fig:heald-curves} and \ref{fig:heald-curves-llama} show the
healing results on QAD on Qwen3-4B-Thinking-2507 and Llama-3.1-8B-Instruct,
respectively. The results answer \textbf{Q1}: QUASAR reaches the lowest training loss and
held-out KL loss in all six settings, and every baseline plateaus
above it. This matches the prediction of
Section~\ref{sec:theory}: lower reconstruction error should translate into a
lower loss floor. At INT2, QUASAR reduces KL by about
30\%, raises top-1 agreement by about 2 points, and improves average task
accuracy by 3.5--4.3 points over the strongest QAT baseline
(Tables~\ref{tab:heald-tasks-qwen} and \ref{tab:heald-tasks-llama}).
Below 4 bits, the best PTQ method trails QUASAR by
$2$--$23\times$ in held-out KL divergence. Additional fidelity metrics and top-1
agreement during training are in Appendix~\ref{app:healing}.

The gap is largest on mathematical reasoning. On GSM8K at INT2, same-format
PTQ scores 0.0--0.2. QUASAR scores 68.8 on Qwen and 66.4 on Llama, while the strongest
competing QAT method reaches only 49.0 and 53.1. On Llama, QUASAR is only 3.7 points
below BF16.

\paragraph{LLM-as-a-judge preference at 2 bits.}

We use an LLM judge (Llama-3.3-70B-Instruct) to perform pairwise comparison between the responses of QUASAR checkpoints versus the PTQ and QAT checkpoints for Qwen3-4B-Thinking-2507 and Llama-3.1-8B-Instruct at INT2 on 128 WildChat prompts (Figure~\ref{fig:judge-w2}). The judge prefers
QUASAR over every quantized opponent, in all 14
comparisons (Table~\ref{tab:judge-w2}) and for almost all samples over PTQ; only the
full-precision teacher is preferred to QUASAR. Sample responses of each checkpoint are included in Appendix~\ref{app:healing}
(Figures~\ref{fig:heal-transcript} and~\ref{fig:judge-w2-health}).

\begin{figure}[t]
\centering
\includegraphics[width=\linewidth]{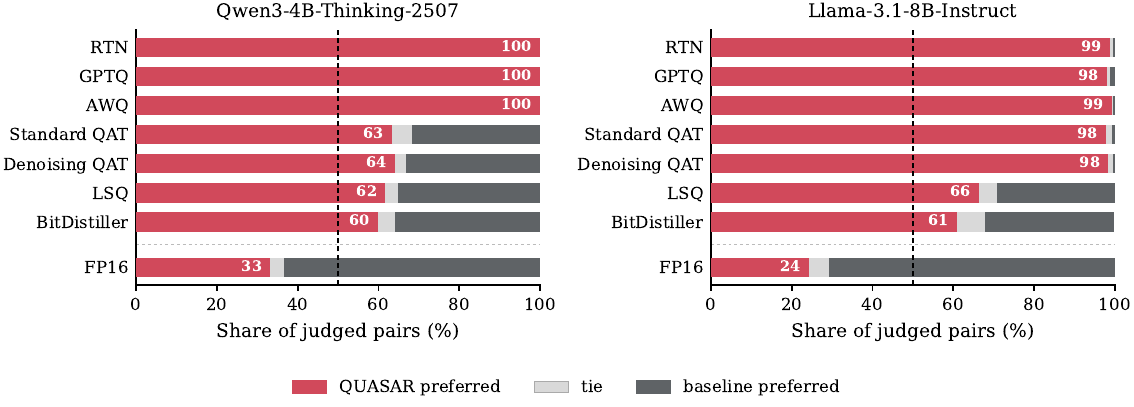}
\caption{LLM-as-a-judge preferences for QUASAR versus the QAD and PTQ checkpoints of Qwen3-4B-Thinking-2507 and Llama-3.1-8B-Instruct at INT2, with Llama 3.3 70B serving as the judge. Each bar shows QUASAR's win, tie, and loss rates on responses to 128 WildChat prompts.
}
\label{fig:judge-w2}
\end{figure}

\begin{figure}[t]
  \centering
  \includegraphics[width=0.98\linewidth]{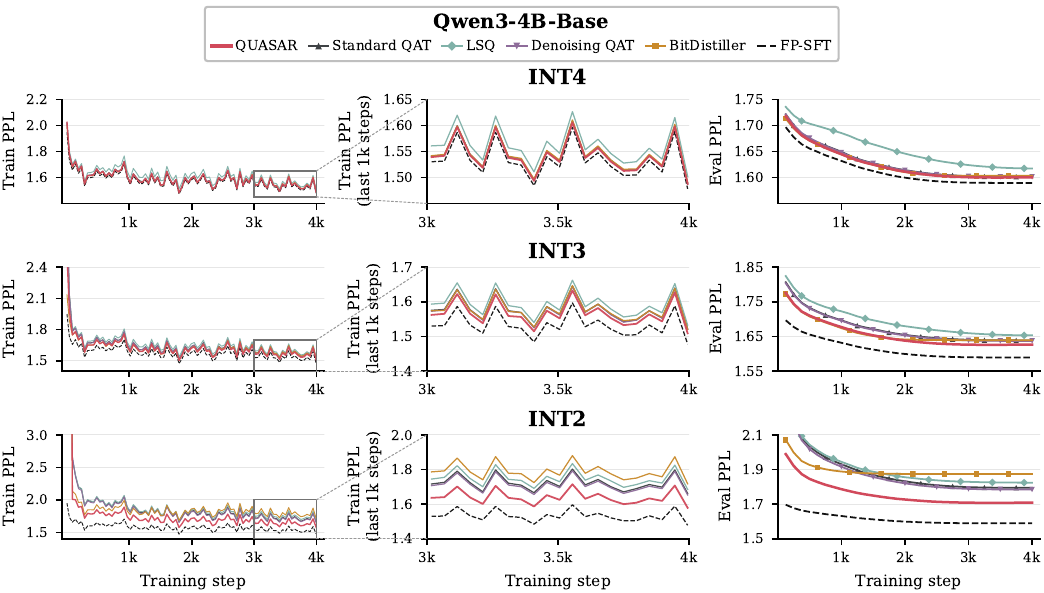}
  \caption{Quantization-aware supervised fine-tuning of Qwen3-4B-Base at INT4/INT3/INT2 on
  OpenMathReasoning (with chain-of-thought responses generated with DeepSeek-R1 and QwQ-32B), comparing different QAT methods and full-precision training (FP-SFT). From left to
  right: training perplexity, a zoomed-in view of the final
  1{,}000 steps, and perplexity on held-out data.}
  \label{fig:adapt-curves}
\end{figure}

\subsection{Results on Adaptation with QAT}\label{sec:exp-adaptation}\label{sec:adaptation}

\textbf{Q2} asks whether a model can learn a new capability directly in the
low-bit weights used at inference. The capability here is genuinely new:
Qwen3-4B-Base follows no instructions, so SFT must teach it both
instruction following and long-form mathematical reasoning.
Figure~\ref{fig:adapt-curves} shows the adaptation curves: QUASAR
reaches the lowest training and held-out perplexity at INT4, INT3, and
INT2.

The gap is most visible at INT2. QUASAR stays close to FP-SFT in perplexity
(1.71 vs.\ 1.59), while the best QAT baseline stops at 1.78. This loss gap
determines whether the new capability survives: QUASAR reaches 29.6
average accuracy, 10.9 points above the strongest QAT baseline. It
scores 60.7 on MATH-500 and 75.4 on GSM8K, while every
fine-tune-then-quantize baseline scores at most 2.1 and 1.2,
respectively (Table~\ref{tab:adapt-qwen}). The advantage is especially pronounced on benchmarks that require long
reasoning traces. FP-SFT's median response length is 27k tokens on
AIME'24 and 20k on HMMT'25, so success requires preserving the adapted
behavior over tens of thousands of generated tokens. At INT2, QUASAR is
the only quantized model to score above zero on HMMT'25, showing that
the 2-bit model can sustain useful reasoning deep into long generations
rather than losing the capability as the trace unfolds. QUASAR is also
the only INT2 arm that beats the un-finetuned base checkpoint
(Appendix~\ref{app:adaptation}, Figure~\ref{fig:adapt-failures}). Appendix~\ref{app:adaptation} examines the INT2 checkpoints further,
including sampling dispersion and divergence from the teacher along
reasoning traces.

\begin{table}[t]\centering\scriptsize
\renewcommand{\arraystretch}{0.85}
\caption{
Evaluation of QAT and PTQ for Qwen3-4B-Base at INT4, INT3, and INT2 precision using OpenMathReasoning. QAT performs supervised fine-tuning directly at the target bit width, whereas PTQ applies quantization after full-precision fine-tuning. We report held-out perplexity and average accuracy across five math benchmarks. Base model (no SFT) and full-precision training (FP-SFT) results are included as references.
}
\label{tab:adapt-qwen}
\setlength{\tabcolsep}{0pt}
\begin{tabular}{>{\hspace*{2pt}}l<{\hspace*{2pt}} >{\columncolor{black!8}[.1pt][.1pt]\raggedleft\arraybackslash}p{\dimexpr0.095\textwidth+4pt\relax}<{\hspace*{2pt}} *{5}{>{\raggedleft\arraybackslash}p{\dimexpr0.095\textwidth+4pt\relax}<{\hspace*{2pt}}} >{\columncolor{black!8}[.1pt][.1pt]\raggedleft\arraybackslash}p{\dimexpr0.095\textwidth+4pt\relax}<{\hspace*{2pt}}}
\toprule
& \multicolumn{1}{c}{Held-out fit} & \multicolumn{6}{c}{Downstream math accuracy (\%) $\uparrow$} \\
\cmidrule(lr){2-2}\cmidrule(lr){3-8}
Method & \multicolumn{1}{>{\columncolor{black!8}[.1pt][.1pt]}c}{PPL $\downarrow$} & \multicolumn{1}{c}{MATH-500} & \multicolumn{1}{c}{GSM8K} & \multicolumn{1}{c}{AIME'24} & \multicolumn{1}{c}{AIME'25} & \multicolumn{1}{c}{HMMT'25} & \multicolumn{1}{>{\columncolor{black!8}[.1pt][.1pt]}c}{\textbf{Avg.}} \\
\midrule
Base (no SFT) & -- & 41.0 & 66.4 & 9.6 & 3.3 & 0.8 & 24.2 \\
FP-SFT & 1.59 & 83.8 & 91.3 & 22.9 & 22.5 & 10.0 & 46.1 \\
\midrule
\multicolumn{8}{>{\hspace*{2pt}}l<{\hspace*{2pt}}}{\textit{2-bit (INT2)}} \\
\addlinespace[1pt]
RTN & $3.5{\times}10^{5}$ & 0.9 & 0.3 & 0.0 & 0.0 & \underline{0.0} & 0.2 \\
GPTQ & 3.12 & 2.1 & 1.1 & 0.0 & 0.0 & \underline{0.0} & 0.6 \\
AWQ & 130 & 1.5 & 1.2 & 0.0 & 0.0 & \underline{0.0} & 0.5 \\
\arrayrulecolor{black!25}\cmidrule[0.4pt](lr){1-8}\arrayrulecolor{black}
Standard QAT & 1.79 & 38.7 & 51.9 & \underline{2.1} & \underline{0.8} & \underline{0.0} & \underline{18.7} \\
LSQ & 1.82 & 36.5 & 49.9 & 0.8 & 0.4 & \underline{0.0} & 17.5 \\
Denoising QAT & \underline{1.78} & \underline{39.2} & 52.5 & 0.8 & \underline{0.8} & \underline{0.0} & 18.7 \\
BitDistiller & 1.87 & 33.6 & \underline{57.0} & 1.2 & 0.4 & \underline{0.0} & 18.4 \\
\rowcolor{oursrow}[\dimexpr\tabcolsep+1pt\relax][\dimexpr\tabcolsep+1pt\relax]
\textbf{QUASAR}~(ours) & \textbf{1.71} & \textbf{60.7} & \textbf{75.4} & \textbf{4.0} & \textbf{6.5} & \textbf{1.5} & \textbf{29.6} \\
\midrule
\multicolumn{8}{>{\hspace*{2pt}}l<{\hspace*{2pt}}}{\textit{3-bit (INT3)}} \\
\addlinespace[1pt]
RTN & 2.24 & 12.8 & 7.6 & 0.0 & 1.7 & 0.0 & 4.4 \\
GPTQ & 1.68 & 71.0 & 85.0 & 11.2 & 11.2 & 4.2 & 36.5 \\
AWQ & 1.83 & 57.3 & 65.1 & 10.0 & 8.3 & 1.7 & 28.5 \\
\arrayrulecolor{black!25}\cmidrule[0.4pt](lr){1-8}\arrayrulecolor{black}
Standard QAT & 1.64 & 75.8 & 87.5 & 15.0 & \textbf{16.2} & 4.6 & 39.8 \\
LSQ & 1.65 & 73.7 & 87.0 & 12.5 & 11.7 & 3.8 & 37.7 \\
Denoising QAT & \underline{1.64} & \underline{75.9} & 87.3 & 14.2 & 13.3 & \textbf{7.1} & 39.6 \\
BitDistiller & 1.64 & 75.7 & \underline{87.8} & \textbf{16.2} & 15.4 & 5.0 & \underline{40.0} \\
\rowcolor{oursrow}[\dimexpr\tabcolsep+1pt\relax][\dimexpr\tabcolsep+1pt\relax]
\textbf{QUASAR}~(ours) & \textbf{1.62} & \textbf{78.5} & \textbf{88.6} & \underline{15.9} & \textbf{16.2} & \underline{6.0} & \textbf{41.1} \\
\midrule
\multicolumn{8}{>{\hspace*{2pt}}l<{\hspace*{2pt}}}{\textit{4-bit (INT4)}} \\
\addlinespace[1pt]
RTN & 1.66 & 78.2 & 90.0 & 15.0 & 13.3 & 7.5 & 40.8 \\
GPTQ & 1.61 & 81.7 & 90.6 & 19.1 & 20.5 & 8.5 & 44.1 \\
AWQ & 1.63 & 80.3 & 90.5 & 15.8 & 17.5 & 8.3 & 42.5 \\
\arrayrulecolor{black!25}\cmidrule[0.4pt](lr){1-8}\arrayrulecolor{black}
Standard QAT & 1.60 & 82.1 & \underline{90.8} & \textbf{21.2} & 20.6 & \underline{10.7} & \underline{45.1} \\
LSQ & 1.62 & 79.0 & 89.1 & 17.9 & 17.1 & 6.7 & 42.0 \\
Denoising QAT & \underline{1.60} & 82.1 & 90.4 & 20.0 & \textbf{21.4} & 9.9 & 44.8 \\
BitDistiller & 1.60 & \underline{82.7} & 90.6 & 20.3 & \underline{20.8} & 9.3 & 44.7 \\
\rowcolor{oursrow}[\dimexpr\tabcolsep+1pt\relax][\dimexpr\tabcolsep+1pt\relax]
\textbf{QUASAR}~(ours) & \textbf{1.60} & \textbf{82.8} & \textbf{90.9} & \underline{20.8} & 19.8 & \textbf{11.2} & \textbf{45.1} \\
\bottomrule
\end{tabular}
\end{table}

\subsection{Results on Quantization-aware Distillation with NVFP4}\label{sec:exp-nvfp4}\label{sec:heal-nvfp4}

\begin{figure}[!t]
\centering
\includegraphics[width=0.9\linewidth]{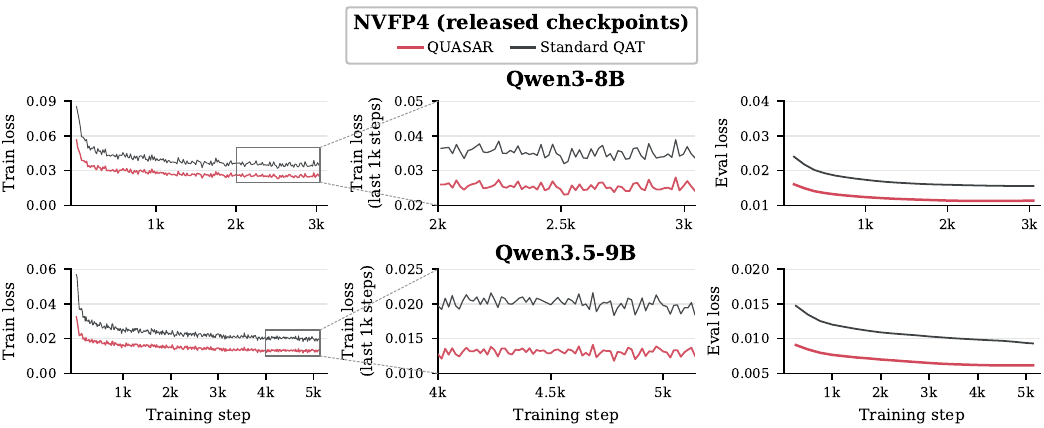}
\caption{Training and evaluation loss of QAD for Qwen3-8B (top) and Qwen3.5-9B
(bottom) in the NVFP4 format on Open-PerfectBlend data. From left to right:
training loss over the full run, a zoomed-in view of the final 1{,}000 steps,
and eval loss on held-out data. Training/eval loss is the forward KL between
the quantized model and its full-precision counterpart.}
\label{fig:nvfp4-curves}
\end{figure}

\textbf{Q3} examines whether QUASAR generalizes to other numerical formats. We focus on NVFP4, a production 4-bit format for Blackwell GPUs whose quantization grid and scaling constraints differ from those of integer quantization (Section~\ref{sec:method-nvfp4}).

We perform QAD for Qwen3-8B and Qwen3.5-9B using the full-precision model as teacher and train the quantized model with simulated NVFP4 quantization on Open-PerfectBlend. We then evaluate the final real quantized NVFP4 checkpoints on downstream tasks using vLLM~\cite{pagedattention}. Our baseline is Standard QAT with a full-precision teacher~\citep{nvfp4qad}. Across both models, QUASAR reduces held-out KL by approximately 30\% and improves average downstream accuracy by 0.5--2.0 points (Figure~\ref{fig:nvfp4-curves}; Table~\ref{tab:nvfp4-checkpoints}).

\begin{table}[!htb]\centering\scriptsize
\renewcommand{\arraystretch}{0.85}
\caption{Evaluation results of different QAD methods for Qwen3-8B
and Qwen3.5-9B in NVFP4. KL divergence on held-out data and accuracy on nine downstream
benchmarks are reported. Evaluated in vLLM using the real NVFP4 quantized
checkpoints.}
\label{tab:nvfp4-checkpoints}
\setlength{\tabcolsep}{0pt}
\begin{tabular}{>{\hspace*{2pt}}l<{\hspace*{2pt}} >{\columncolor{black!8}[.1pt][.1pt]\raggedleft\arraybackslash}p{\dimexpr0.053\textwidth+4pt\relax}<{\hspace*{2pt}} >{\columncolor{black!8}[.1pt][.1pt]\raggedleft\arraybackslash}p{\dimexpr0.053\textwidth+4pt\relax}<{\hspace*{2pt}} *{9}{>{\raggedleft\arraybackslash}p{\dimexpr0.053\textwidth+4pt\relax}<{\hspace*{2pt}}}}
\toprule
& \multicolumn{1}{c}{Fidelity} & \multicolumn{10}{c}{Downstream task accuracy (\%) $\uparrow$} \\
\cmidrule(lr){2-2}\cmidrule(l){3-12}
Method & \multicolumn{1}{>{\columncolor{black!8}[.1pt][.1pt]}c}{KL $\downarrow$} & \multicolumn{1}{>{\columncolor{black!8}[.1pt][.1pt]}c}{\textbf{Avg.}} & \multicolumn{1}{c}{GSM8K} & \multicolumn{1}{c}{MMLU} & \multicolumn{1}{c}{ARC-C} & \multicolumn{1}{c}{ARC-E} & \multicolumn{1}{c}{Hella.} & \multicolumn{1}{c}{WinoG.} & \multicolumn{1}{c}{TQA} & \multicolumn{1}{c}{IFEval} & \multicolumn{1}{c}{GPQA-D} \\
\midrule
\multicolumn{12}{>{\hspace*{2pt}}l<{\hspace*{2pt}}}{\textit{Qwen3-8B}}\\
\addlinespace[1pt]
BF16 & -- & 68.6 & 86.7 & 72.9 & 56.7 & 80.9 & 75.0 & 67.7 & 54.4 & 84.8 & 38.4 \\
\arrayrulecolor{black!25}\cmidrule[0.4pt](lr){1-12}\arrayrulecolor{black}
Standard QAT & 0.016 & 67.1 & 83.9 & 70.8 & 55.4 & 80.3 & \textbf{73.6} & \textbf{67.1} & 53.9 & 80.6 & 37.9 \\
\rowcolor{oursrow}[\dimexpr\tabcolsep+1pt\relax][\dimexpr\tabcolsep+1pt\relax]
\textbf{QUASAR}~(ours) & \textbf{0.011} & \textbf{67.6} & \textbf{87.0} & \textbf{71.1} & \textbf{55.7} & \textbf{80.4} & \textbf{73.6} & 66.8 & \textbf{54.2} & \textbf{81.2} & \textbf{38.4} \\
\midrule
\multicolumn{12}{>{\hspace*{2pt}}l<{\hspace*{2pt}}}{\textit{Qwen3.5-9B}}\\
\addlinespace[1pt]
BF16 & -- & 71.2 & 74.4 & 78.6 & 55.7 & 74.2 & 78.1 & 72.8 & 53.7 & 74.4 & 79.1 \\
\arrayrulecolor{black!25}\cmidrule[0.4pt](lr){1-12}\arrayrulecolor{black}
Standard QAT & 0.009 & 69.4 & 73.3 & 76.7 & 56.3 & \textbf{75.6} & 76.7 & 70.8 & 52.8 & 66.0 & 76.1 \\
\rowcolor{oursrow}[\dimexpr\tabcolsep+1pt\relax][\dimexpr\tabcolsep+1pt\relax]
\textbf{QUASAR}~(ours) & \textbf{0.006} & \textbf{70.2} & \textbf{73.7} & \textbf{77.0} & \textbf{56.6} & \textbf{75.6} & \textbf{77.2} & \textbf{70.9} & \textbf{53.9} & \textbf{68.5} & \textbf{78.1} \\
\bottomrule
\end{tabular}
\end{table}

\subsection{Ablations and Training Overhead}
\label{sec:heal-ablations}

Table~\ref{tab:heald-abl} and Figure~\ref{fig:heald-abl} ablate the two main components of QUASAR: the scale search used for code assignment and the weighted least-squares fit used to estimate the dequantization parameters. We evaluate variants that modify or remove either the scale search or the saliency weighting. Two findings stand out. First, reconstruction must remain part of the training loop. Applying QUASAR only at initialization and then continuing with Standard QAT gives up nearly all of the improvement, indicating that the reconstruction needs to be updated as the weights evolve. Second, using Adam's second moment as the saliency signal performs on par with maintaining a separate Fisher estimate, at lower computational cost.

\begin{table}[t]\centering\scriptsize
\renewcommand{\arraystretch}{0.92}
\setlength{\tabcolsep}{3pt}
\caption{
Ablation study of QUASAR components using INT2 QAD for Qwen3-4B-Thinking-2507. Each variant modifies either scale search or saliency weighting. We report the final KL divergence between the quantized model and the full-precision counterpart on held-out data; $\Delta$ denotes the change in KL relative to Standard QAT.
}
\label{tab:heald-abl}
\begin{tabular}{@{}l ccc ccc rr@{}}
\toprule
& \multicolumn{3}{c}{Scale search} & \multicolumn{3}{c}{Fit weighting} & & \\
\cmidrule(lr){2-4}\cmidrule(lr){5-7}
Variant & \shortstack{Every\\step} &
\shortstack{Initialization\\only} & No & Adam & Fisher & None & KL $\downarrow$ & $\Delta$ \\
\midrule
Standard QAT & & & $\checkmark$ & & & $\checkmark$ & 0.175 & -- \\
QUASAR -- Adam weighting only & & & $\checkmark$ & $\checkmark$ & & & 0.164 & $-6.1\%$ \\
QUASAR -- Initialization only & & $\checkmark$ & & & & $\checkmark$ & 0.172 & $-1.6\%$ \\
QUASAR -- Scale search only & $\checkmark$ & & & & & $\checkmark$ & 0.114 & $-34.7\%$ \\
QUASAR -- Fisher weighting & $\checkmark$ & & & & $\checkmark$ & & 0.113 & $-35.5\%$ \\
\rowcolor{oursrow}
\textbf{QUASAR} & $\checkmark$ & & & $\checkmark$ & & & \textbf{0.112} & $\mathbf{-35.9\%}$ \\
\bottomrule
\end{tabular}
\end{table}

\begin{figure}[t]
\centering
\includegraphics[width=0.98\linewidth]{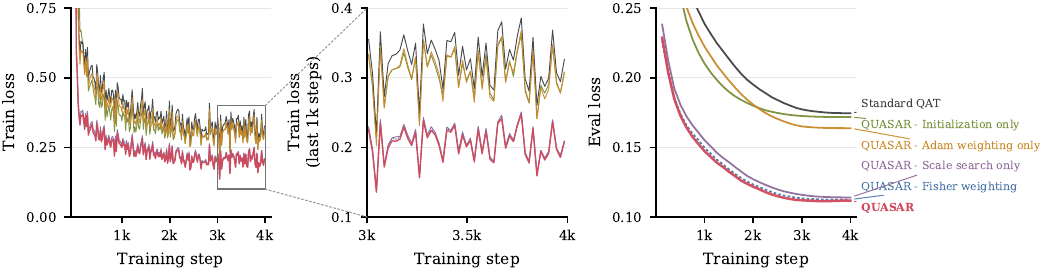}
\caption{Component ablations of QUASAR: quantization-aware distillation of
Qwen3-4B-Thinking-2507 at INT2, one curve per variant in
Table~\ref{tab:heald-abl}. The loss is the forward KL between the quantized
model and its full-precision counterpart. From left to right: training loss
over the full run, a zoomed view of the final 1{,}000 steps, and the same
loss on held-out data.}
\label{fig:heald-abl}
\end{figure}

Across both models, the median selected range factor falls from 0.95 at
INT4 to 0.60 at INT2 (Appendix~\ref{app:method},
Figure~\ref{fig:heald-ss}): with fewer codes, keeping
the extreme weights starves the bulk of the distribution, so the search trades them away more
aggressively.

\begin{table}[!htbp]
\centering
\footnotesize
\renewcommand{\arraystretch}{0.92}
\setlength{\tabcolsep}{5pt}
\caption{Wall-clock time of one training step, per component and in seconds:
quantization-aware distillation of Qwen3-4B-Thinking-2507 at INT3 on 8x
H100 GPUs. QUASAR differs from Standard QAT only in the weight reconstruction row.}
\label{tab:heald-training-overhead}
\setlength{\tabcolsep}{7pt}
\begin{tabular}{@{}l *{2}{>{\centering\arraybackslash}p{0.18\linewidth}}@{}}
\toprule
Component & Standard QAT & QUASAR \\
\midrule
Teacher forward & 0.373 & 0.373 \\
Student compute & 0.366 & 0.366 \\
\textbf{Weight reconstruction} & \textbf{0.294} & \textbf{0.335} \\
Loss computation & 0.065 & 0.065 \\
Backward & 1.805 & 1.805 \\
Optimizer update & 0.021 & 0.022 \\
Other + synchronization & 0.006 & 0.006 \\
\midrule
\textbf{Total step} & \textbf{2.930} & \textbf{2.972} \\
\% vs. Standard QAT & -- & $+1.4\%$ \\
\bottomrule
\end{tabular}
\end{table}

\paragraph{Training overhead.}
Table~\ref{tab:heald-training-overhead} breaks down the wall-clock time for one
INT3 optimizer step. QUASAR adds only $1.4\%$ step time on a matched comparison.

\section{Conclusion}

We introduced QUASAR, a QAT method that mitigates the mismatch between the reconstructed weights used in the forward pass and the latent full-precision weights updated during training. QUASAR minimizes loss-aware reconstruction error by searching over quantization scales and fitting dequantization parameters via saliency-weighted least squares, using statistics derived from the optimizer. Our theoretical analysis decomposes the QAT convergence bound into three terms: initialization, minibatch noise, and loss-aware reconstruction error. Only the last depends on the reconstruction map, and it is exactly the objective QUASAR minimizes at each step. Reconstruction is therefore a direct lever on the training trajectory and, under an additional PL condition, on the loss of the final quantized model. Across two model families, multiple integer bit widths, and NVFP4, QUASAR consistently achieves the lowest training and held-out losses. These improvements translate to downstream-task benchmarks, with gains being largest at lower precision. Moreover, QUASAR matches the training speed of Standard QAT and introduces no inference overhead, while substantially improving the quality of the resulting quantized checkpoints.

\endgroup

\bibliographystyle{plainnat}
\bibliography{refs}

\appendix

\newpage

\section*{Appendix}

\section{Proofs}
\label{app:theory}
This appendix contains the proofs of the results stated in
Section~\ref{sec:theory}.

\subsection{Proofs of the main results}
\label{app:quasar-certificate}

We use the notation
$\E_t[\cdot]=\E[\cdot\mid\mathcal H_t]$ introduced in
Section~\ref{sec:theory}. Let us first verify the sufficient condition for
Assumption~3 given there. If $\loss$ is $L$-smooth and
$h_{t,i}\geq h_{\min}>0$, then, for $y,d\in\R^d$,
\[
  \norm{\nabla^2\loss(y)d}^2
  \leq
  L^2\norm{d}^2
  \leq
  \frac{L^2}{h_{\min}}\sum_i h_{t,i}d_i^2.
\]
Consequently, \eqref{eq:quasar-curvature-condition} holds with
$C=L^2/h_{\min}$.

\begin{proof}[Proof of Theorem~\ref{thm:quasar-certificate}]
Fix $t<T$ and put $d_t=r_t-w_t$. By the fundamental theorem of calculus,
\[
  \nabla\loss(r_t)-\nabla\loss(w_t)
  =
  \int_0^1 \nabla^2\loss(w_t+u d_t)d_t\,du.
\]
Jensen's inequality and Assumption~3 now give
\[
\begin{aligned}
  \norm{\nabla\loss(r_t)-\nabla\loss(w_t)}^2
  &\leq
  \int_0^1\norm{\nabla^2\loss(w_t+u d_t)d_t}^2\,du \\
  &\leq
  C\widehat S_t(r_t)
  =
  C\min_{r\in\mathcal R_t}\widehat S_t(r),
\end{aligned}
\]
where the last equality follows from the definition of $r_t$. This proves
the first assertion.

We turn to the convergence bound. Set
\[
  a_t=\nabla\loss(w_t),
  \qquad
  c_t=\nabla\loss(r_t).
\]
By smoothness and the update in Assumption~2,
\[
\begin{aligned}
  \loss(w_{t+1})
  \leq{}&
  \loss(w_t)
  -\eta\langle a_t,c_t+\xi_t\rangle \\
  &+\frac{L\eta^2}{2}
  \norm{c_t+\xi_t}^2.
\end{aligned}
\]
Conditional on $\mathcal H_t$, both $a_t$ and $c_t$ are fixed. Moreover,
\[
  \E_t\norm{c_t+\xi_t}^2
  =\norm{c_t}^2+\E_t\norm{\xi_t}^2
  \leq (1+M)\norm{c_t}^2+\sigma^2.
\]
It follows that
\[
  \E_t[\loss(w_{t+1})]
  \leq
  \loss(w_t)
  -\eta\langle a_t,c_t\rangle
  +\frac{L\eta^2}{2}
   \left((1+M)\norm{c_t}^2+\sigma^2\right).
\]
Using the identity
\[
  2\langle a_t,c_t\rangle
  =\norm{a_t}^2+\norm{c_t}^2-\norm{c_t-a_t}^2
\]
and the first assertion, we obtain
\begin{equation}
\begin{aligned}
  \E_t[\loss(w_{t+1})]
  \leq{}&
  \loss(w_t)
  -\frac{\eta}{2}\norm{a_t}^2 \\
  &-\frac{\eta\bigl(1-L\eta(1+M)\bigr)}{2}
    \norm{c_t}^2
  +\frac{\eta C}{2}\widehat S_t(r_t)
  +\frac{L\eta^2}{2}\sigma^2.
\end{aligned}
\label{eq:quasar-one-step}
\end{equation}
We take expectations and sum this inequality for $0\leq t<T$. The terms
involving the loss telescope, and therefore
\[
\begin{aligned}
  \frac{\eta}{2}\sum_{t=0}^{T-1}\E\Big[
    \norm{\nabla\loss(w_t)}^2
    +\bigl(1-L\eta(1+M)\bigr)\norm{\nabla\loss(r_t)}^2
  \Big]
  \leq{}&
  \loss(w_0)-\E\loss(w_T) \\
  &+\frac{\eta C}{2}
    \sum_{t=0}^{T-1}\E\widehat S_t(r_t)
  +\frac{L\eta^2T}{2}\sigma^2.
\end{aligned}
\]
Since $\loss(w_T)\geq\loss^\star$, the desired result follows after division
by $\eta T/2$.
\end{proof}

\begin{proof}[Proof of Corollary~\ref{cor:quasar-final-model}]
Put $q=1-\eta\mu$. The assumption on $\eta$ ensures that $0<q<1$ and that
$1-L\eta(1+M)>0$. We may thus omit the term involving $\norm{c_t}^2$ in
\eqref{eq:quasar-one-step}. The PL inequality at $w_t$ gives
\[
  \E[\loss(w_{t+1})-\loss^\star]
  \leq
  q\,\E[\loss(w_t)-\loss^\star]
  +\frac{\eta C}{2}\E\widehat S_t(r_t)
  +\frac{L\eta^2}{2}\sigma^2.
\]
Iteration yields
\begin{equation}
\begin{aligned}
  \E[\loss(w_T)-\loss^\star]
  \leq{}&
  q^T(\loss(w_0)-\loss^\star) \\
  &+\frac{\eta C}{2}
    \sum_{t=0}^{T-1}q^{T-1-t}\E\widehat S_t(r_t) \\
  &+\frac{L\eta\sigma^2}{2\mu}
    \bigl(1-q^T\bigr).
\end{aligned}
\label{eq:quasar-latent-recursion}
\end{equation}

It remains to compare $w_T$ and $r_T$. Applying the smoothness inequality at
$w_T$ with the point $w_T-L^{-1}\nabla\loss(w_T)$, we get
\[
  \loss\!\left(w_T-\frac{1}{L}\nabla\loss(w_T)\right)
  \leq
  \loss(w_T)-\frac{1}{2L}\norm{\nabla\loss(w_T)}^2.
\]
The left-hand side is at least $\loss^\star$, and hence
\[
  \norm{\nabla\loss(w_T)}^2
  \leq
  2L\bigl(\loss(w_T)-\loss^\star\bigr).
\]
The argument used for the first assertion of the theorem, now applied to the
terminal reconstruction, also gives
\[
  \norm{\nabla\loss(r_T)-\nabla\loss(w_T)}^2
  \leq C\widehat S_T(r_T).
\]
By the PL inequality at $r_T$ and
$\norm{x+y}^2\leq2\norm{x}^2+2\norm{y}^2$, it follows that
\[
\begin{aligned}
  \loss(r_T)-\loss^\star
  &\leq
  \frac{1}{2\mu}\norm{\nabla\loss(r_T)}^2 \\
  &\leq
  \frac{1}{\mu}\norm{\nabla\loss(w_T)}^2
  +\frac{1}{\mu}
    \norm{\nabla\loss(r_T)-\nabla\loss(w_T)}^2 \\
  &\leq
  \frac{2L}{\mu}\bigl(\loss(w_T)-\loss^\star\bigr)
  +\frac{C}{\mu}\widehat S_T(r_T).
\end{aligned}
\]
We finally take expectations and use
\eqref{eq:quasar-latent-recursion}. Since $q=1-\eta\mu$, this is precisely
\eqref{eq:quasar-final-bound}.
\end{proof}

\subsection{Validation under the SGD dynamics}
\label{app:sgd-validation}

Theorem~\ref{thm:quasar-certificate} analyzes STE--SGD,
while all main experiments train with AdamW. To check that the predicted
ordering is not an optimizer artifact, we repeat the INT2 quantization-aware
distillation comparison under the analyzed dynamics: plain SGD (no momentum,
constant learning rate, no warmup, no gradient clipping) for 1{,}024 steps at
global batch 32, with the recipe otherwise unchanged. QUASAR's saliency comes
from an explicitly maintained EMA of squared gradients instead of AdamW's
second moment. At every stable learning rate, QUASAR reaches a lower train and
held-out loss floor than Standard QAT, matching the ordering under AdamW
(Figure~\ref{fig:sgd-ladder}).

\begin{figure}[t]
\centering
\includegraphics[width=0.98\linewidth]{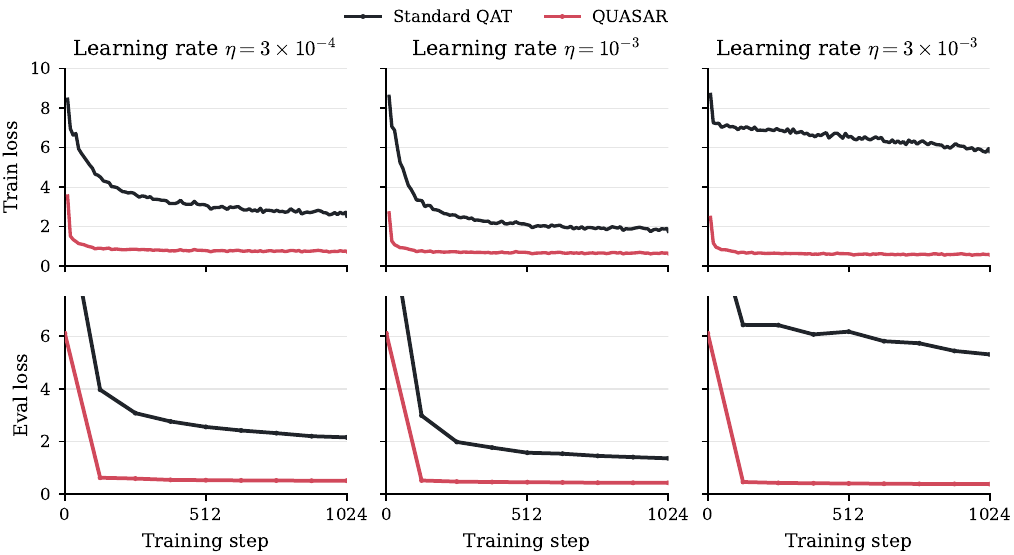}
\caption{Standard QAT and QUASAR under plain SGD (no momentum, constant
learning rate, no warmup) on the INT2 quantization-aware distillation recipe.}
\label{fig:sgd-ladder}
\end{figure}

\subsection{Choice of saliency weighting}
\label{app:saliency-power}

The score $\widehat S_t$ weights squared perturbations by the saliency $h_t$,
which tracks the diagonal curvature. Since the linearized gradient mismatch is
$\norm{\Hess_t\Delta_t}^2=\Delta_t^\top\Hess_t^2\Delta_t$, a natural
alternative is to weight by $h_t^2$ and select the reconstruction that
minimizes an estimate of the mismatch itself. We compare the two weightings
with everything else fixed: under AdamW at the INT2, INT3, and INT4
distillation learning rates, and under plain SGD at the three stable learning
rates of Figure~\ref{fig:sgd-ladder}. The $h_t$ weighting reaches a lower
held-out loss in all six comparisons (Figure~\ref{fig:saliency-power}). The
squared weighting is also unstable: it collapses at INT4 and at the larger SGD
learning rates.

\begin{figure}[t]
\centering
\includegraphics[width=0.98\linewidth]{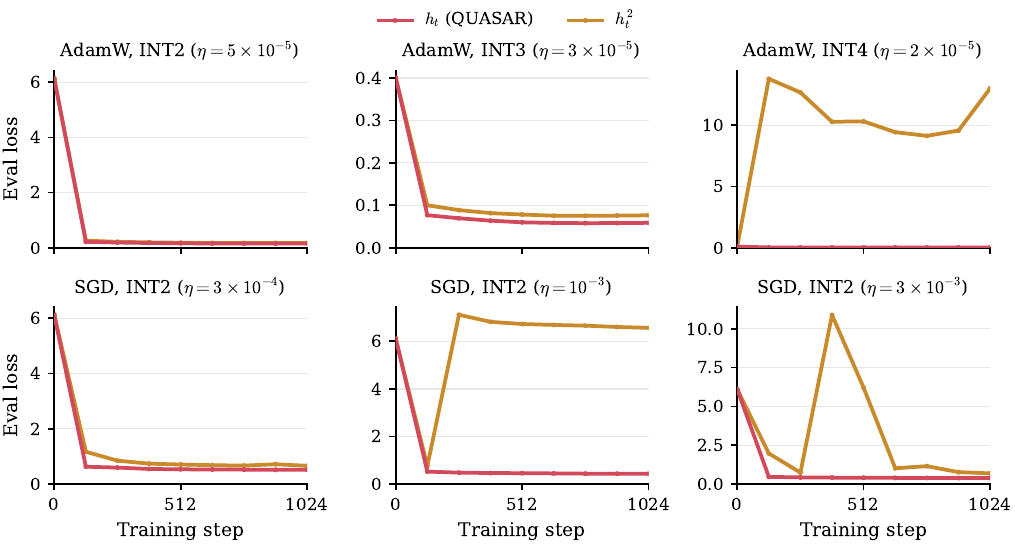}
\caption{Held-out loss for QUASAR selecting reconstructions with saliency
weights $h_t$ versus $h_t^2$, under AdamW at the INT2, INT3, and INT4
distillation learning rates (top) and under plain SGD at the three stable INT2
learning rates (bottom).}
\label{fig:saliency-power}
\end{figure}

\section{Method Details}\label{app:method}

This appendix measures, on real weight groups, what
Sections~\ref{sec:borrowed} and~\ref{sec:method} illustrate: the per-weight
reconstruction error, the clipping ranges scale search selects, and the
saliency signal $h$. It ends by instantiating QUASAR for the NVIDIA Rubin
INT3 lookup-table format, as Section~\ref{sec:method-nvfp4} does for NVFP4.

\begin{figure}[H]
\centering
\includegraphics[width=\linewidth]{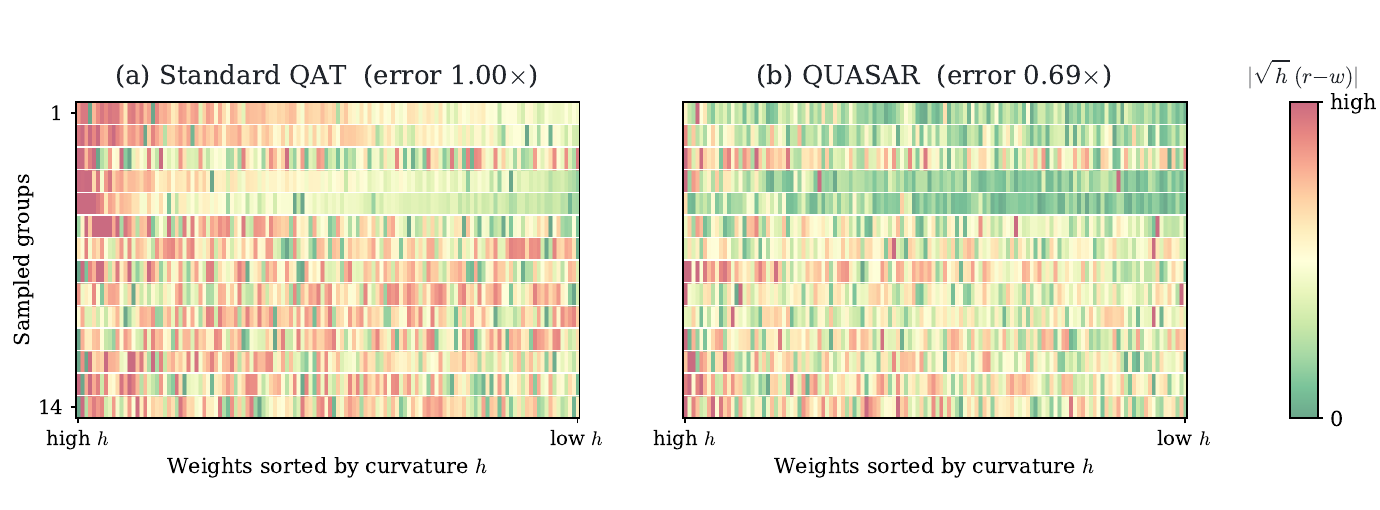}
\caption{Per-weight loss-aware reconstruction error $|\sqrt{h}\,(r-w)|$ for
14 weight groups sampled from Qwen3-4B at INT3, under (a) Standard QAT's
min--max reconstruction and (b) QUASAR's. Each row is one group of 128
weights, sorted by saliency $h$.}
\label{fig:beforeafter}
\end{figure}

\begin{figure}[H]
\centering
\includegraphics[width=\linewidth]{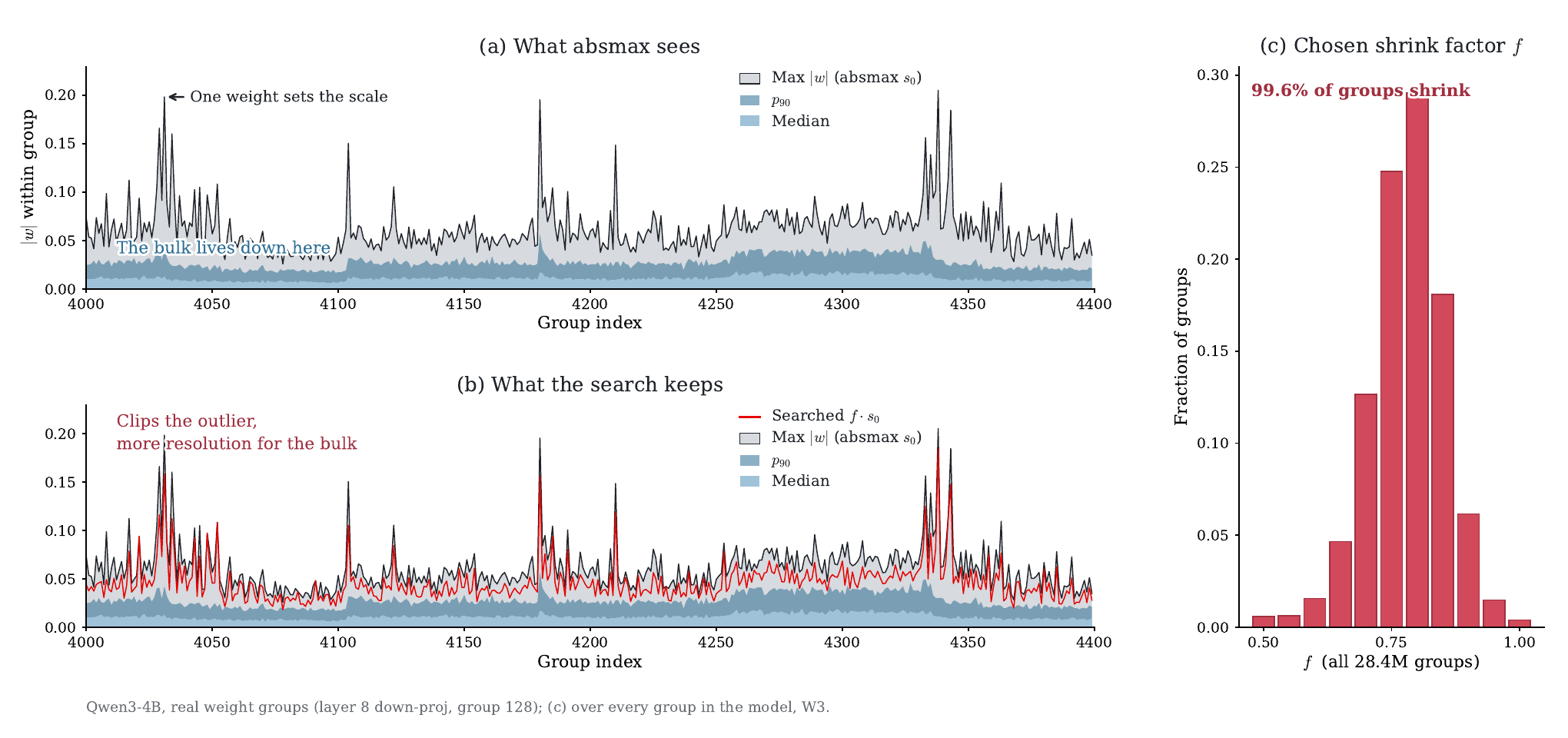}
\caption{Clipping ranges of weight groups (Qwen3-4B, INT3). (a)~The full
min--max range of each group is set by one extreme weight, far above the
bulk. (b)~The searched range clips that weight and follows the bulk.
(c)~Distribution of the selected range factors $f^\star$ across all 28.4M
groups; 99.6\% of groups select a range narrower than min--max.}
\label{fig:range-decomp}
\end{figure}

\begin{figure}[!htb]
\centering
\includegraphics[width=0.92\linewidth]{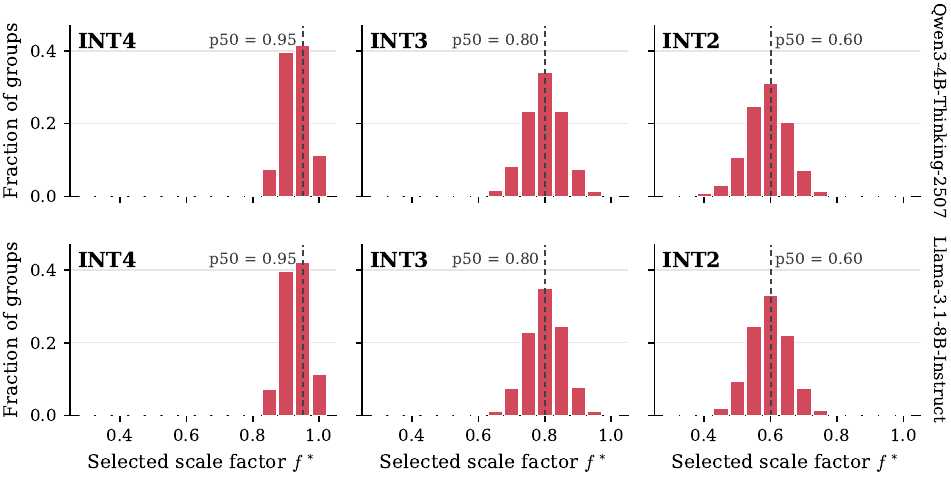}
\caption{Distribution of the selected clipping-range factor $f^*$ across
weight groups at the end of QAD runs at INT4/INT3/INT2 for
Qwen3-4B-Thinking-2507 and Llama-3.1-8B-Instruct.}
\label{fig:heald-ss}
\end{figure}

\begin{figure}[H]
\centering
\includegraphics[width=\linewidth]{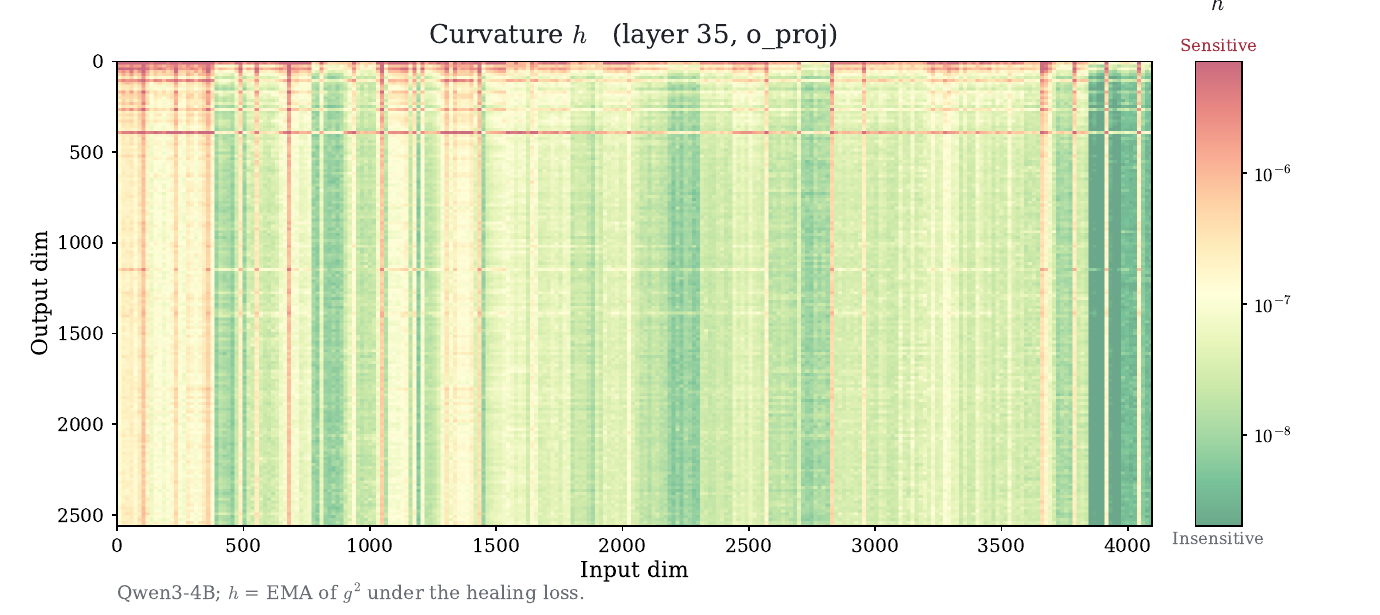}
\caption{The saliency signal $h$ over one layer of Qwen3-4B. QUASAR's
weight groups run along the input dimension, 128 weights per group. $h$
varies strongly within each group, so the weighting tells QUASAR which
weights to reconstruct most accurately.}
\label{fig:curvature-heatmap}
\end{figure}

\paragraph{Instantiation for NVIDIA Rubin INT3.}\label{app:rubin}
Rubin INT3 replaces affine dequantization with a hardware lookup table~\citep{nvidia_ptx_rubin,semianalysis_rubin}. The correspondence with the main method is:

\begin{center}
\small
\begin{tabular}{@{}lll@{}}
\toprule
Component & Affine QUASAR & Rubin INT3 LUT-QUASAR \\
\midrule
Stored code & Integer $q_i$ & 3-bit index $q_i\in\{0,\ldots,7\}$ \\
Dequantization & $r_i=sq_i+z$ & $r_i=C_{q_i}$ \\
Dequantization parameters & $(s,z)$ & $\{C_0,\ldots,C_7\}$ (E4M3) \\
Hardware block & Group of $g$ weights & $8\times64=512$ weights \\
Export & Codes, scale, and zero-point & Packed indices and codebook \\
\bottomrule
\end{tabular}
\end{center}

For fixed indices, the optimal hardware-representable value for each used entry is its loss-aware mean rounded to E4M3:
\begin{equation}
\begin{aligned}
  \widehat S_t(q,C)
  &=
  \sum_{i=1}^{512} h_i\bigl(C_{q_i}-w_i\bigr)^2,
  \\
  C_k^\star
  &=
  Q_{\mathrm{E4M3}}\!\left(
  \frac{\sum_{i:q_i=k} h_iw_i}
       {\sum_{i:q_i=k} h_i}
  \right),
\end{aligned}
\label{eq:rubin-fit}
\end{equation}
where $Q_{\mathrm{E4M3}}$ applies the target hardware's rounding.

For each candidate clipping range $f\in\mathcal F$, QUASAR assigns indices $q_f$, computes $C_f^\star$ using Equation~\eqref{eq:rubin-fit}, evaluates $\widehat S_t(q_f,C_f^\star)$, and retains the best reconstruction.

Each block stores $512$ packed 3-bit indices and eight 8-bit E4M3 entries, for
\[
  \frac{512\cdot3+8\cdot8}{512}
  =
  3.125
  \quad\text{bits per weight}.
\]
After training, curvature estimation, candidate search, and codebook fitting are discarded. The exported checkpoint is an ordinary Rubin INT3 LUT checkpoint.

\section{Additional Healing Results}\label{app:healing}

This appendix provides additional healing results for
Section~\ref{sec:exp-healing}: more evaluation metrics for the final
checkpoints, training dynamics, and the response quality of the INT2
checkpoints, including a full example response across all methods.

\begin{table}[!htb]\centering\scriptsize
\renewcommand{\arraystretch}{0.85}
\caption{Additional evaluation metrics for QAD of Qwen3-4B-Thinking-2507 and
Llama-3.1-8B-Instruct at INT4/INT3/INT2, for the same final quantized
checkpoints as Tables~\ref{tab:heald-tasks-qwen}
and~\ref{tab:heald-tasks-llama}: reverse KL, 99th-percentile per-token KL,
and top-1 agreement on tokens where the full-precision model is uncertain
(top-1 probability $<0.9$). Best result per bit width in \textbf{bold};
second best \underline{underlined}.}
\label{tab:heald-extra}
\setlength{\tabcolsep}{2pt}
\begin{tabular}{l *{3}{>{\raggedleft\arraybackslash}p{0.082\textwidth}} !{\hspace{4pt}{\color{black!25}\vrule}\hspace{4pt}} *{3}{>{\raggedleft\arraybackslash}p{0.082\textwidth}}}
\toprule
& \multicolumn{3}{c}{Qwen3-4B-Thinking-2507} & \multicolumn{3}{c}{Llama-3.1-8B-Instruct} \\
\cmidrule(lr){2-4}\cmidrule(lr){5-7}
Method & \multicolumn{1}{c}{Rev.\ KL $\downarrow$} & \multicolumn{1}{c}{P99 KL $\downarrow$} & \multicolumn{1}{c}{\shortstack{Low-conf.\\Top-1 $\uparrow$}} & \multicolumn{1}{c}{Rev.\ KL $\downarrow$} & \multicolumn{1}{c}{P99 KL $\downarrow$} & \multicolumn{1}{c}{\shortstack{Low-conf.\\Top-1 $\uparrow$}} \\
\midrule
\multicolumn{7}{l}{\textit{2-bit (INT2)}}\\
RTN & 33.336 & 19.731 & 0.4 & 19.936 & 16.897 & 0.3 \\
GPTQ & 4.357 & 8.494 & 32.0 & 8.100 & 12.461 & 25.4 \\
AWQ & 14.386 & 11.959 & 17.8 & 16.781 & 15.193 & 6.9 \\
\arrayrulecolor{black!25}\cmidrule[0.4pt](lr){1-7}\arrayrulecolor{black}
Standard QAT & 0.361 & 2.459 & 59.9 & 0.264 & 2.094 & 68.3 \\
LSQ & 0.407 & 2.631 & 57.8 & 0.283 & 2.167 & 67.2 \\
Denoising QAT & \underline{0.346} & \underline{2.386} & \underline{60.5} & 0.270 & 2.148 & 68.5 \\
BitDistiller & 0.413 & 2.630 & 57.8 & \underline{0.242} & \underline{1.895} & \underline{68.7} \\
\rowcolor{oursrow}
\textbf{QUASAR}~(ours) & \textbf{0.201} & \textbf{1.702} & \textbf{65.9} & \textbf{0.153} & \textbf{1.380} & \textbf{73.4} \\
\midrule
\multicolumn{7}{l}{\textit{3-bit (INT3)}}\\
RTN & 1.490 & 5.868 & 41.0 & 0.541 & 2.736 & 57.0 \\
GPTQ & 0.172 & 1.915 & 63.8 & 0.105 & 0.931 & 75.7 \\
AWQ & 0.338 & 3.253 & 55.4 & 0.325 & 2.067 & 62.2 \\
\arrayrulecolor{black!25}\cmidrule[0.4pt](lr){1-7}\arrayrulecolor{black}
Standard QAT & 0.085 & 0.930 & 73.3 & 0.056 & 0.604 & 81.5 \\
LSQ & 0.098 & 1.020 & 71.5 & 0.064 & 0.675 & 80.0 \\
Denoising QAT & \underline{0.083} & \underline{0.914} & \underline{73.5} & 0.055 & 0.611 & 81.5 \\
BitDistiller & 0.091 & 0.944 & 73.3 & \underline{0.050} & \underline{0.544} & \underline{82.0} \\
\rowcolor{oursrow}
\textbf{QUASAR}~(ours) & \textbf{0.070} & \textbf{0.769} & \textbf{76.1} & \textbf{0.043} & \textbf{0.476} & \textbf{83.2} \\
\midrule
\multicolumn{7}{l}{\textit{4-bit (INT4)}}\\
RTN & 0.152 & 1.568 & 66.2 & 0.056 & 0.487 & 80.4 \\
GPTQ & 0.030 & 0.426 & 81.3 & 0.016 & 0.187 & 88.8 \\
AWQ & 0.054 & 0.746 & 74.7 & 0.035 & 0.389 & 83.1 \\
\arrayrulecolor{black!25}\cmidrule[0.4pt](lr){1-7}\arrayrulecolor{black}
Standard QAT & \underline{0.020} & 0.301 & 84.8 & 0.015 & 0.188 & 89.3 \\
LSQ & 0.028 & 0.378 & 82.4 & 0.017 & 0.205 & 88.8 \\
Denoising QAT & \underline{0.020} & \underline{0.295} & 84.7 & 0.015 & 0.188 & \underline{89.5} \\
BitDistiller & \underline{0.020} & 0.299 & \underline{84.9} & \underline{0.014} & \underline{0.176} & \underline{89.5} \\
\rowcolor{oursrow}
\textbf{QUASAR}~(ours) & \textbf{0.018} & \textbf{0.265} & \textbf{85.6} & \textbf{0.013} & \textbf{0.154} & \textbf{90.3} \\
\bottomrule
\end{tabular}
\end{table}

\begin{figure}[!htb]
\centering
\includegraphics[width=0.9\linewidth]{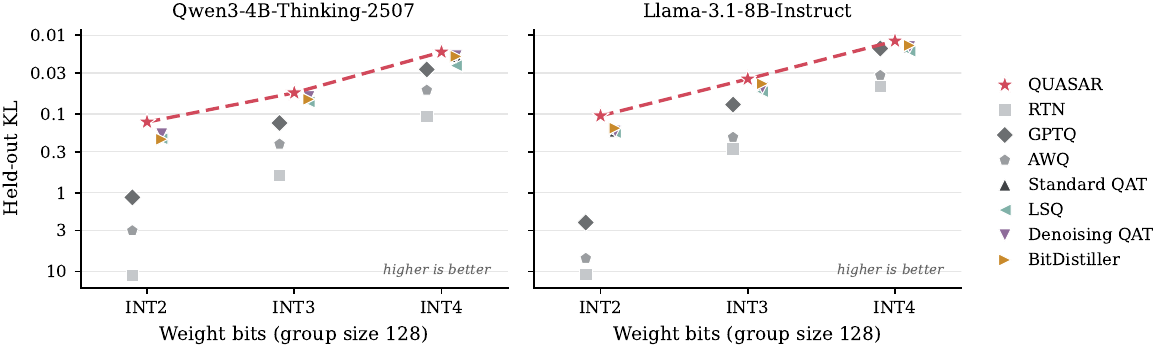}
\caption{Forward KL between the quantized model and its full-precision
counterpart on held-out data, versus bit width, for QAD of
Qwen3-4B-Thinking-2507 (left) and Llama-3.1-8B-Instruct (right). Higher is better.}
\label{fig:healf-matrix-kl}
\end{figure}

\begin{figure}[!htb]
\centering
\includegraphics[width=0.9\linewidth]{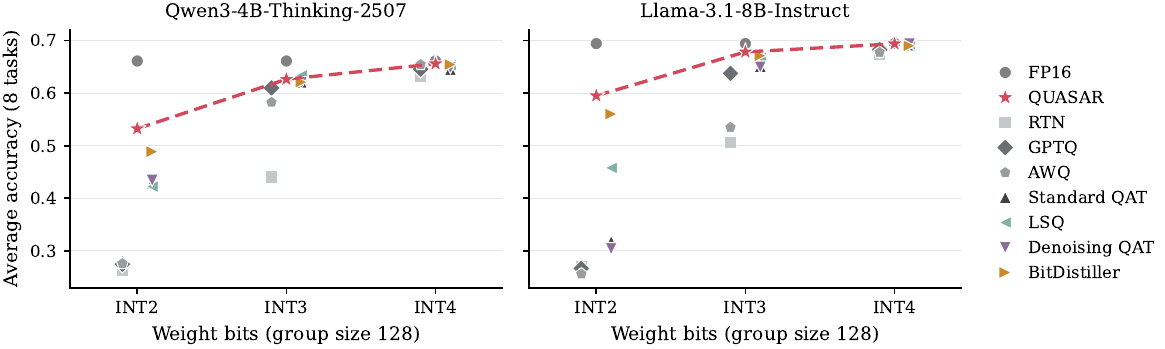}
\caption{Average accuracy over the eight benchmarks of
Table~\ref{tab:heald-tasks-qwen} and~\ref{tab:heald-tasks-llama} versus bit width, for QAD of
Qwen3-4B-Thinking-2507 (left) and Llama-3.1-8B-Instruct (right). Gray circles mark the full-precision
models.}
\label{fig:healf-matrix-cap}
\end{figure}

\begin{figure}[!htb]
\centering
\includegraphics[width=0.78\linewidth]{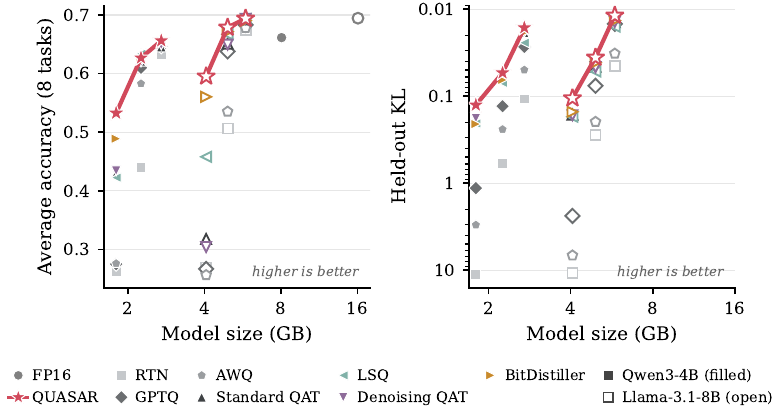}
\caption{Average accuracy over eight benchmarks (left) and KL to the
full-precision model on held-out data (right), versus model size on disk,
for the INT4/INT3/INT2 checkpoints of Qwen3-4B-Thinking-2507 (filled
markers) and Llama-3.1-8B-Instruct (open markers). Gray circles mark the
full-precision models.}
\label{fig:healf-size}
\end{figure}

\begin{figure}[!htb]
\centering
\includegraphics[width=0.92\linewidth]{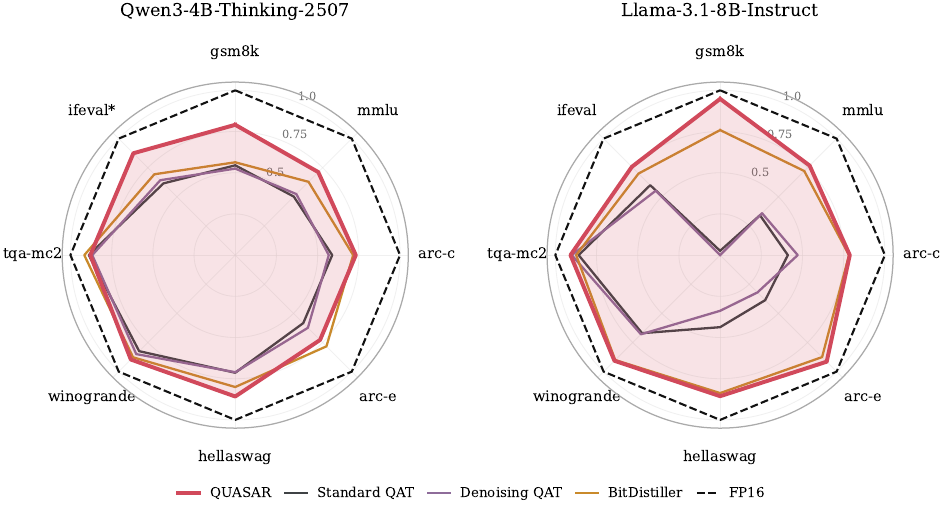}
\caption{Per-task accuracy of the INT2 QAD checkpoints of
Qwen3-4B-Thinking-2507 (left) and Llama-3.1-8B-Instruct (right) on the eight
benchmarks of Table~\ref{tab:heald-tasks-qwen} and~\ref{tab:heald-tasks-llama}, each task normalized by the
full-precision model's accuracy.}
\label{fig:healf-radar}
\end{figure}

\begin{figure}[h]
\centering
\includegraphics[width=0.98\linewidth]{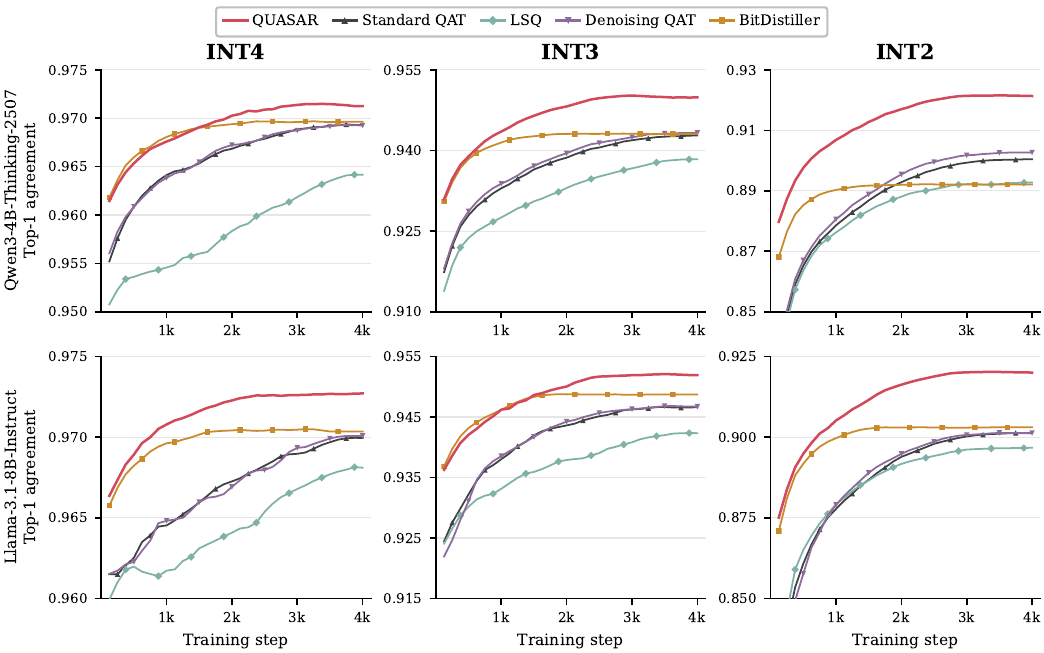}
\caption{Top-1 agreement with the full-precision model on held-out data
during QAD of Qwen3-4B-Thinking-2507 (top row) and Llama-3.1-8B-Instruct
(bottom row) at INT4, INT3, and INT2, for the QAT methods of
Figure~\ref{fig:heald-curves} and~\ref{fig:heald-curves-llama}.}
\label{fig:healf-agree}
\end{figure}

\begin{figure}[h]
\centering
\includegraphics[width=\linewidth]{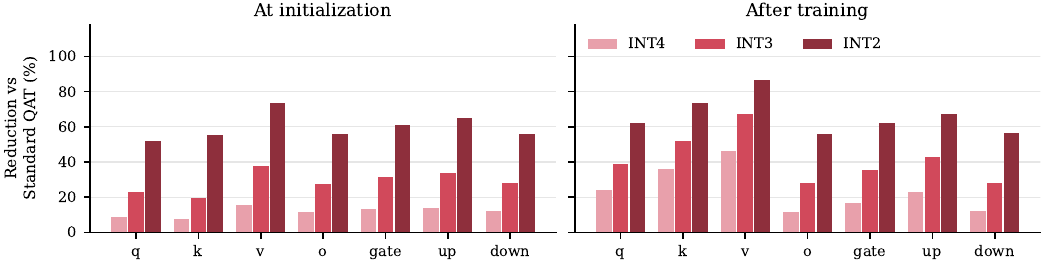}
\caption{Reduction of the loss-aware reconstruction error achieved by
QUASAR relative to Standard QAT, for Qwen3-4B-Thinking-2507 at INT4, INT3,
and INT2, split by projection type, at initialization (left) and at the end
of training (right). Bars show the median reduction over the modules of
each projection type, with the error measured against the full-precision
weights.}
\label{fig:sref-permodule}
\end{figure}

\begin{figure}[h]
\centering
\includegraphics[width=0.45\linewidth]{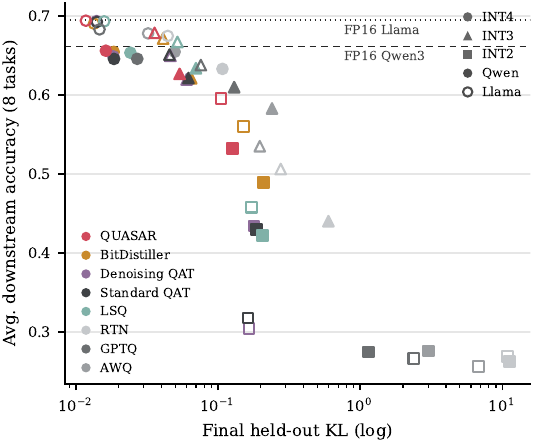}
\caption{Average accuracy over eight benchmarks versus final KL to the
full-precision model on held-out data, for the QAT and PTQ methods of
Table~\ref{tab:heald-tasks-qwen} and~\ref{tab:heald-tasks-llama} at INT4, INT3, and INT2 on both healing
models. Dashed lines mark the full-precision models.}
\label{fig:kl-vs-acc}
\end{figure}

\begin{table}[!htb]\centering\footnotesize
\renewcommand{\arraystretch}{1.02}
\caption{Net preference for QUASAR in the pairwise LLM-judge study of
Figure~\ref{fig:judge-w2}: for each opponent and model, the mean per-prompt
preference in $[-1,1]$ ($+1$ means QUASAR is preferred on every prompt),
with a 95\% bootstrap confidence interval over the 128 prompts. The final
row compares QUASAR with the full-precision model.}
\label{tab:judge-w2}
\setlength{\tabcolsep}{8pt}
\begin{tabular}{@{}l rr@{}}
\toprule
vs.\ QUASAR & Qwen3-4B-Thinking-2507 & Llama-3.1-8B-Instruct \\
\midrule
RTN & $+1.00$ \([+1.00, +1.00]\) & $+0.98$ \([+0.96, +1.00]\) \\
GPTQ & $+1.00$ \([+1.00, +1.00]\) & $+0.97$ \([+0.94, +0.99]\) \\
AWQ & $+1.00$ \([+1.00, +1.00]\) & $+0.99$ \([+0.97, +1.00]\) \\
\arrayrulecolor{black!25}\cmidrule[0.4pt](lr){1-3}\arrayrulecolor{black}
Standard QAT & $+0.32$ \([+0.21, +0.42]\) & $+0.97$ \([+0.94, +0.99]\) \\
LSQ & $+0.27$ \([+0.15, +0.38]\) & $+0.37$ \([+0.25, +0.49]\) \\
Denoising QAT & $+0.31$ \([+0.21, +0.41]\) & $+0.98$ \([+0.95, +1.00]\) \\
BitDistiller & $+0.24$ \([+0.12, +0.35]\) & $+0.29$ \([+0.17, +0.41]\) \\
\arrayrulecolor{black!25}\cmidrule[0.4pt](lr){1-3}\arrayrulecolor{black}
FP16 & $-0.30$ \([-0.40, -0.20]\) & $-0.46$ \([-0.56, -0.37]\) \\
\bottomrule
\end{tabular}
\end{table}

\begin{figure}[h]
\centering
\includegraphics[width=0.85\linewidth]{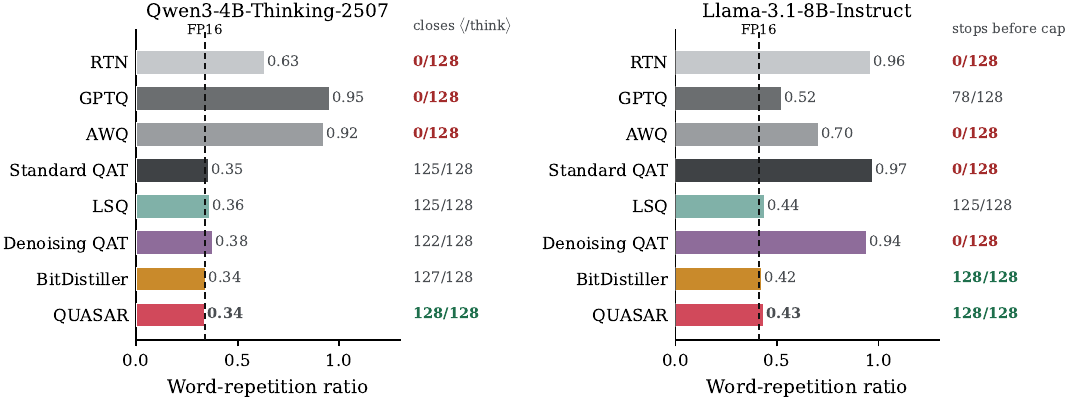}
\caption{Response statistics of the INT2 checkpoints of
Qwen3-4B-Thinking-2507 (left) and Llama-3.1-8B-Instruct (right) on 128
held-out prompts per model,
with greedy decoding. Bars: the share of words in a response that repeat
earlier words, averaged over prompts (dashed line: full-precision model).
Counts on the right of each panel: how many of the 128 responses terminate,
meaning Qwen closes its reasoning block and Llama stops before the token
cap.}
\label{fig:judge-w2-health}
\end{figure}

\begin{figure}[p]\centering\scriptsize
\begin{tabular}{@{}p{0.15\linewidth}p{0.81\linewidth}@{}}
\toprule
\multicolumn{2}{@{}l}{\textbf{Prompt:} \emph{Write a story about a man being stranded on an island.}}\\
\midrule
\textbf{FP16} & The salt stung Leo's eyes before he even registered the crash. One moment, he was shouting into the cockpit of his battered sedan, the next, the world\,\ldots \\[2pt]
\midrule\multicolumn{2}{@{}l}{\textit{2-bit (INT2)}}\\[1pt]
RTN & max o mil MD o. over md edge md mdish ai \} ats \} \} further display $\circ$scs \textless{}igrated found "arql eventually o, same, if ( cs1) \textless{} \textless{}$\circ$ \textless{}ql \textless{}, \textbackslash ql o \textless{} \textless{}! ( \textless{} \,\ldots \\[2pt]
GPTQ & *A [R] :Help "l" (or: "a" in 1000+). 0 * F : 0 (To the 0) : 0 : 0 : 0 : 0 : 0 : 0 : 0 : 0 : 0 : 10000: 0 : 10000: 0 : 0 : 0 : 0 : 0 : 0 : 0 : 0 : 0 : \,\ldots \\[2pt]
AWQ & go \{r=0 \{r=1, 100 100 100 100$\circ$REEN$\circ$.1 $\circ$$\circ$100$\circ$$\circ$$\circ$ 100$\circ$ 10 100. 100 10 10 1 10 1 1 1 0 1 100 1 1 1 1 1 ( 1 $\circ$ 1 2 1 02 1 1 1 1 1 1 1. 1 1 1 1 1 1 1 1 1 1 1\,\ldots \\[2pt]
Standard QAT & The first thing Elias noticed wasn't the ocean, but the *absence* of it. Not the roar of waves, but the silence of his own thoughts, a hollow space wh\,\ldots \\[2pt]
Denoising QAT & Leo woke to the sound of water. Not the gentle lapping of a tide, but a relentless, crashing roar against the rocks of a tiny, rocky island he'd never\,\ldots \\[2pt]
LSQ & Leo woke to the smell of salt and wet earth. The first thing he noticed wasn't the storm, but the *absence* of sound. The roar of the ocean that had b\,\ldots \\[2pt]
BitDistiller & Kai's first thought wasn't panic, but a slow, heavy dread. The salt air stung his eyes, thick with the scent of wet earth and something else---something\,\ldots \\[2pt]
\rowcolor{oursrow}
\textbf{QUASAR (ours)} & The last thing Alex remembered was the *thud* of the car door slamming shut, the roar of the storm tearing through the windshield, and the sickening l\,\ldots \\[2pt]
\midrule\multicolumn{2}{@{}l}{\textit{3-bit (INT3)}}\\[1pt]
RTN & Okay, the user wants a story about a man stranded on an island. Hmm, let's unpack that. First, I should consider what they might need. Are they writin\,\ldots \\[2pt]
GPTQ & The sea had taken him. Not with a crash, but with a slow, suffocating pull. Elias had been a fisherman, not a sailor, and the *Mariana*---his small, stu\,\ldots \\[2pt]
AWQ & The storm had been a fury of glass and thunder, tearing the sea from its bones. Elias woke to the *crack* of splintered wood and the sharp, metallic s\,\ldots \\[2pt]
Standard QAT & Elias woke to the sound of the ocean, not the crash of waves, but the *thump-thump-thump* of his own heart against his ribs. Salt stung his eyes, and \,\ldots \\[2pt]
Denoising QAT & Leo woke to the smell of salt and something else -- wet earth, maybe, or decaying leaves. His head throbbed, a dull ache behind his eyes. He was lying \,\ldots \\[2pt]
LSQ & \#\# The Island That Forgot to Breathe The last thing Elias remembered was the *thump* of the boat hitting the water, the sickening lurch of the world s\,\ldots \\[2pt]
BitDistiller & Leo's last memory was the *thwack* of a wave against the hull of the *Marlin*, the sharp sting of salt in his eyes, and the terrifying, silent *thud* \,\ldots \\[2pt]
\rowcolor{oursrow}
\textbf{QUASAR (ours)} & Leo's last memory was the *scream* of the engine, the sickening lurch as the boat plunged into the churning grey sea, and the world dissolving into a \,\ldots \\[2pt]
\midrule\multicolumn{2}{@{}l}{\textit{4-bit (INT4)}}\\[1pt]
RTN & Leo didn't know how he got there. One moment he was a man in a sleek, unremarkable boat, the next---*thud*---a violent wave slammed the hull, the engine d\,\ldots \\[2pt]
GPTQ & Elias's world dissolved in a roar. One moment, he was gripping the steering wheel of his aging sedan, the rain lashing the windshield like a thousand \,\ldots \\[2pt]
AWQ & Leo's last thought before the world dissolved into water and silence was the *thump* of the boat's hull against the rocks. Then, the roar of the ocean\,\ldots \\[2pt]
Standard QAT & Leo didn't know how long he'd been adrift when the first wave of panic hit. It wasn't the cold, the salt, or the crushing weight of the ocean's indiff\,\ldots \\[2pt]
Denoising QAT & Leo woke to the sound of the ocean, not the roar of a storm, but the low, insistent *thrum* of the sea against the rocks. His head throbbed, a dull ac\,\ldots \\[2pt]
LSQ & Leo's last memory was the *thud* of the plane hitting the water, not the roar of the engines or the frantic shouts of the crew. He woke up gasping, th\,\ldots \\[2pt]
BitDistiller & The crash wasn't loud. It was a sickening *thud* of metal and earth, followed by a silence so profound it felt like the island itself had held its bre\,\ldots \\[2pt]
\rowcolor{oursrow}
\textbf{QUASAR (ours)} & The impact wasn't a crash. It was a *sigh* of the world ending. Ben's world dissolved into a sickening lurch, the roar of the ocean swallowing his scr\,\ldots \\[2pt]
\bottomrule
\end{tabular}
\caption{The beginning of each method's response to one held-out prompt,
for Qwen3-4B-Thinking-2507 at INT2, INT3, and INT4 with greedy decoding.}
\label{fig:heal-transcript}
\end{figure}

\clearpage

\section{Additional Adaptation Results}\label{app:adaptation}

This appendix extends the adaptation results of
Section~\ref{sec:exp-adaptation} with three analyses of the INT2
checkpoints: how each model's generated samples split into answer outcomes,
how dispersed its sampled successes are, and how divergence from the
full-precision SFT model evolves along a reasoning trace.

\begin{figure}[!htb]
\centering
\includegraphics[width=\linewidth]{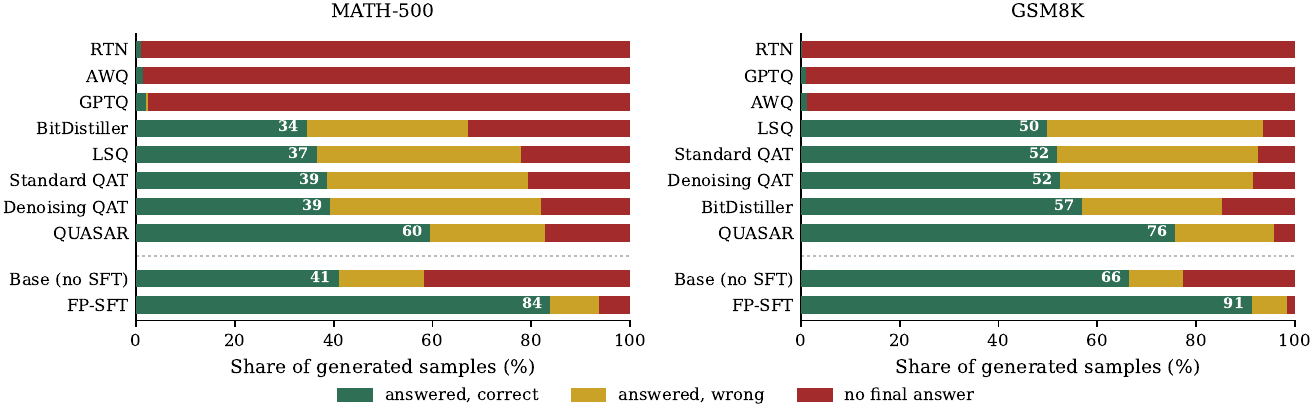}
\caption{Ratios of correct answers, wrong answers, and missing final answers
across all samples generated by the INT2 PTQ and QAT checkpoints of
Qwen3-4B-Base, on MATH-500 and GSM8K. The bottom two rows (un-finetuned base
model and full-precision SFT) are included for reference.}
\label{fig:adapt-failures}
\end{figure}

\begin{figure}[h]
\centering
\includegraphics[width=0.95\linewidth]{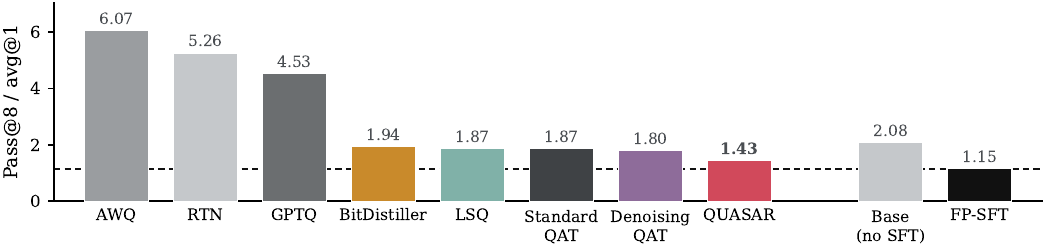}
\caption{How consistent each INT2 checkpoint of Qwen3-4B-Base is across
repeated attempts: the ratio of pass@8 to avg@1 on MATH-500, with eight
samples per problem at temperature 0.6. Avg@1 is the accuracy of a single
attempt; pass@8 is the share of problems solved at least once in eight
attempts. A ratio of $1$ means the model solves the same problems on every
attempt; a larger ratio means its successes are more hit-or-miss. The
dashed line marks full-precision SFT; the two rightmost bars are
references.}
\label{fig:adapt-dispersion}
\end{figure}

\begin{figure}[h]
\centering
\includegraphics[width=0.82\linewidth]{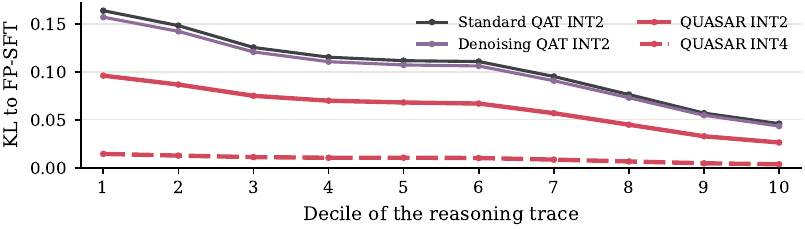}
\caption{How far each quantized checkpoint drifts from the full-precision
SFT model as a reasoning trace unfolds. We take 1{,}000 traces generated by
the full-precision SFT model on MATH-500, feed the same tokens to every
model, and measure the per-position forward KL to the full-precision SFT
model, averaged within each tenth of the trace.}
\label{fig:adapt-divergence}
\end{figure}

\end{document}